\newtheorem{proposition}{Proposition}
\def\b{\ensuremath\boldsymbol}
\icmltitlerunning{Unsupervised and Supervised Principal Component Analysis: Tutorial}
\begin{document}

\twocolumn[
\icmltitle{Unsupervised and Supervised Principal Component Analysis: Tutorial}

\icmlauthor{Benyamin Ghojogh}{bghojogh@uwaterloo.ca}
\icmladdress{Department of Electrical and Computer Engineering, 
\\Machine Learning Laboratory, University of Waterloo, Waterloo, ON, Canada}
\icmlauthor{Mark Crowley}{mcrowley@uwaterloo.ca}
\icmladdress{Department of Electrical and Computer Engineering, 
\\Machine Learning Laboratory, University of Waterloo, Waterloo, ON, Canada}

\icmlkeywords{Tutorial, Principal Component Analysis}

\vskip 0.3in
]

\begin{abstract}
This is a detailed tutorial paper which explains the Principal Component Analysis (PCA), Supervised PCA (SPCA), kernel PCA, and kernel SPCA. We start with projection, PCA with eigen-decomposition, PCA with one and multiple projection directions, properties of the projection matrix, reconstruction error minimization, and we connect to autoencoder. Then, PCA with singular value decomposition, dual PCA, and kernel PCA are covered. SPCA using both scoring and Hilbert-Schmidt independence criterion are explained. Kernel SPCA using both direct and dual approaches are then introduced. We cover all cases of projection and reconstruction of training and out-of-sample data. Finally, some simulations are provided on Frey and AT\&T face datasets for verifying the theory in practice.
\end{abstract}

\section{Introduction}

Assume we have a dataset of \textit{instances} or \textit{data points} $\{(\b{x}_i, \b{y}_i)\}_{i=1}^n$ with sample size $n$ and dimensionality $\b{x}_i  \in \mathbb{R}^d$ and $\b{y}_i \in \mathbb{R}^\ell$. 
The $\{\b{x}_i\}_{i=1}^n$ are the input data to the model and the $\{\b{y}_i\}_{i=1}^n$ are the observations (labels).
We define $\mathbb{R}^{d \times n} \ni \b{X} := [\b{x}_1, \dots, \b{x}_n]$ and $\mathbb{R}^{\ell \times n} \ni \b{Y} := [\b{y}_1, \dots, \b{y}_n]$.
We can also have an out-of-sample data point, $\b{x}_t \in \mathbb{R}^d$, which is not in the training set. If there are $n_t$ out-of-sample data points, $\{\b{x}_{t,i}\}_1^{n_t}$, we define $\mathbb{R}^{d \times n_t} \ni \b{X}_t := [\b{x}_{t,1}, \dots, \b{x}_{t,n_t}]$.
Usually, the data points exist on a subspace or sub-manifold. Subspace or manifold learning tries to learn this sub-manifold \cite{ghojogh2019feature}.

Principal Component Analysis (PCA) \cite{jolliffe2011principal} is a very well-known and fundamental linear method for subspace learning and dimensionality reduction \cite{friedman2001elements}. This method, which is also used for feature extraction \cite{ghojogh2019feature}, was first proposed by Pearson in 1901 \cite{pearson1901liii}.
In order to learn a nonlinear sub-manifold, kernel PCA was proposed by \cite{scholkopf1997kernel,scholkopf1998nonlinear}. It maps the data to high dimensional feature space hoping to fall on a linear manifold in that space.  

PCA and kernel PCA are unsupervised methods for subspace learning. To use the class labels in PCA, supervised PCA was proposed \cite{bair2006prediction} which scores the features of the $\b{X}$ and reduces the features before applying PCA. This type of SPCA were mostly used in bioinformatics \cite{ma2011principal}.

Afterwards, another type of SPCA \cite{barshan2011supervised} was proposed which has a very solid theory and PCA is actually a special case of it when we the labels are not used. 
This SPCA also has dual and kernel SPCA.
It is noteworthy that parametric PCA \cite{levada2020parametric} has also been proposed recently. 

PCA and SPCA have had many applications for example eigenfaces \cite{turk1991eigenfaces,turk1991face} and kernel eigenfaces \cite{yang2000face} for face recognition and detecting orientation of image using PCA \cite{mohammadzade2017critical}.
There exist many other applications of PCA and SPCA in the literature.
In this paper, we explain the theory of PCA, kernel SPCA, SPCA, and kernel SPCA and provide some simulations for verifying the theory in practice.

\section{Principal Component Analysis}

\subsection{Projection Formulation}

\subsubsection{A Projection Point of View}

Assume we have a data point $\b{x} \in \mathbb{R}^d$. We want to project this data point onto the vector space spanned by $p$ vectors $\{\b{u}_1, \dots, \b{u}_p\}$ where each vector is $d$-dimensional and usually $p \ll d$. We stack these vectors column-wise in matrix $\b{U} = [\b{u}_1, \dots, \b{u}_p] \in \mathbb{R}^{d \times p}$. In other words, we want to project $\b{x}$ onto the column space of $\b{U}$, denoted by $\mathbb{C}\text{ol}(\b{U})$.

The projection of $\b{x} \in \mathbb{R}^d$ onto $\mathbb{C}\text{ol}(\b{U}) \in \mathbb{R}^p$ and then its representation in the $\mathbb{R}^d$ (its reconstruction) can be seen as a linear system of equations:
\begin{align}\label{equation_projection}
\mathbb{R}^d \ni \widehat{\b{x}} := \b{U \beta},
\end{align}
where we should find the unknown coefficients $\b{\beta} \in \mathbb{R}^p$. 

If the $\b{x}$ lies in the $\mathbb{C}\text{ol}(\b{U})$ or $\textbf{span}\{\b{u}_1, \dots, \b{u}_p\}$, this linear system has exact solution, so $\widehat{\b{x}} = \b{x} = \b{U \beta}$. However, if $\b{x}$ does not lie in this space, there is no any solution $\b{\beta}$ for $\b{x} = \b{U \beta}$ and we should solve for projection of $\b{x}$ onto $\mathbb{C}\text{ol}(\b{U})$ or $\textbf{span}\{\b{u}_1, \dots, \b{u}_p\}$ and then its reconstruction. In other words, we should solve for Eq. (\ref{equation_projection}). In this case, $\widehat{\b{x}}$ and $\b{x}$ are different and we have a residual:
\begin{align}\label{equation_residual_1}
\b{r} = \b{x} - \widehat{\b{x}} = \b{x} - \b{U \beta},
\end{align}
which we want to be small. As can be seen in Fig. \ref{figure_residual_and_space}, the smallest residual vector is orthogonal to $\mathbb{C}\text{ol}(\b{U})$; therefore:
\begin{align}
\b{x} - \b{U\beta} \perp \b{U} &\implies \b{U}^\top (\b{x} - \b{U \beta}) = 0, \nonumber \\
& \implies \b{\beta} = (\b{U}^\top \b{U})^{-1} \b{U}^\top \b{x}. \label{equation_beta}
\end{align}
It is noteworthy that the Eq. (\ref{equation_beta}) is also the formula of coefficients in linear regression \cite{friedman2001elements} where the input data are the rows of $\b{U}$ and the labels are $\b{x}$; however, our goal here is different. Nevertheless, in Section \ref{section_PCA_reconstruction}, some similarities of PCA and regression will be introduced.

\begin{figure}[!t]
\centering
\includegraphics[width=2.2in]{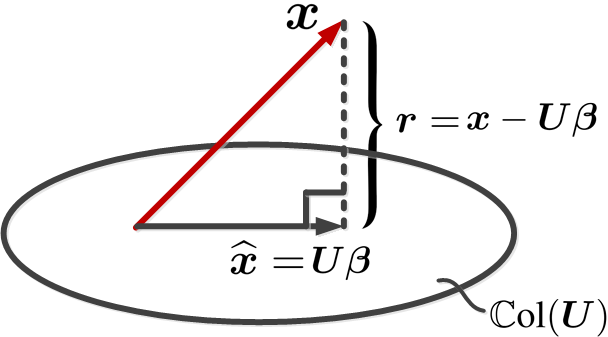}
\caption{The residual and projection onto the column space of $\b{U}$.}
\label{figure_residual_and_space}
\end{figure}

Plugging Eq. (\ref{equation_beta}) in Eq. (\ref{equation_projection}) gives us:
\begin{align*}
\widehat{\b{x}} = \b{U} (\b{U}^\top \b{U})^{-1} \b{U}^\top \b{x}.
\end{align*}
We define:
\begin{align}\label{equation_hat_matrix}
\mathbb{R}^{d \times d} \ni \b{\Pi} := \b{U} (\b{U}^\top \b{U})^{-1} \b{U}^\top,
\end{align}
as ``projection matrix'' because it projects $\b{x}$ onto $\mathbb{C}\text{ol}(\b{U})$ (and reconstructs back).
Note that $\b{\Pi}$ is also referred to as the ``hat matrix'' in the literature because it puts a hat on top of $\b{x}$.

If the vectors $\{\b{u}_1, \dots, \b{u}_p\}$ are orthonormal (the matrix $\b{U}$ is orthogonal), we have $\b{U}^\top = \b{U}^{-1}$ and thus $\b{U}^\top \b{U} = \b{I}$. Therefore, Eq. (\ref{equation_hat_matrix}) is simplified:
\begin{align}
& \b{\Pi} = \b{U} \b{U}^\top.
\end{align}
So, we have:
\begin{align}\label{equation_x_hat}
\widehat{\b{x}} = \b{\Pi}\, \b{x} = \b{U} \b{U}^\top \b{x}.
\end{align}

\begin{figure}[!t]
\centering
\includegraphics[width=2.8in]{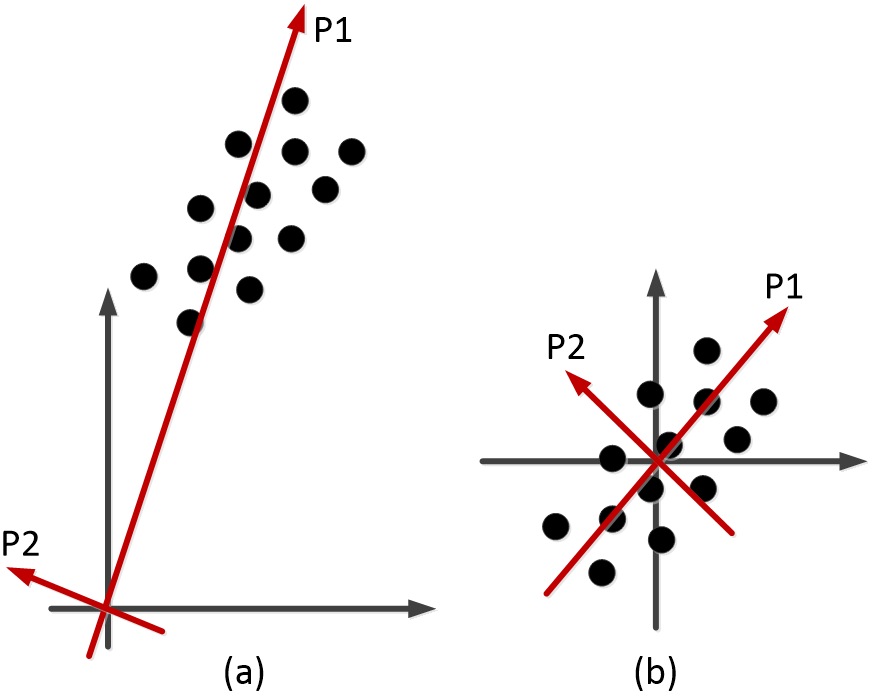}
\caption{The principal directions P1 and P2 for (a) non-centered and (b) centered data. As can be seen, the data should be centered for PCA.}
\label{figure_centered_data}
\end{figure}

\subsubsection{Projection and Reconstruction in PCA}

The Eq. (\ref{equation_x_hat}) can be interpreted in this way: The $\b{U}^\top \b{x}$ projects $\b{x}$ onto the row space of $\b{U}$, i.e., $\mathbb{C}\text{ol}(\b{U}^\top)$ (projection onto a space spanned by $d$ vectors which are $p$-dimensional). We call this projection, ``projection onto the PCA subspace''. It is ``subspace'' because we have $p \leq d$ where $p$ and $d$ are dimensionality of PCA subspace and the original $\b{x}$, respectively. Afterwards, $\b{U} (\b{U}^\top \b{x})$ projects the projected data back onto the column space of $\b{U}$, i.e., $\mathbb{C}\text{ol}(\b{U})$ (projection onto a space spanned by $p$ vectors which are $d$-dimensional). We call this step ``reconstruction from the PCA'' and we want the residual between $\b{x}$ and its reconstruction $\widehat{\b{x}}$ to be small.

If there exist $n$ training data points, i.e., $\{\b{x}_i\}_{i=1}^n$, the projection of a training data point $\b{x}$ is:
\begin{align}\label{equation_projection_onePoint}
\mathbb{R}^{p} \ni \widetilde{\b{x}} := \b{U}^\top \breve{\b{x}},
\end{align}
where:
\begin{align}\label{equation_mean_removed_x}
&\mathbb{R}^{d} \ni \breve{\b{x}} := \b{x} - \b{\mu}_x,
\end{align}
is the centered data point and:
\begin{align}
&\mathbb{R}^{d} \ni \b{\mu}_x := \frac{1}{n} \sum_{i=1}^n \b{x}_i,
\end{align}
is the mean of training data points.
The reconstruction of a training data point $\b{x}$ after projection onto the PCA subspace is:
\begin{align}\label{equation_reconstruction_onePoint}
\mathbb{R}^{d} \ni \widehat{\b{x}} := \b{U}\b{U}^\top \breve{\b{x}} + \b{\mu}_x = \b{U}\widetilde{\b{x}} + \b{\mu}_x,
\end{align}
where the mean is added back because it was removed before projection.

Note that in PCA, all the data points should be centered, i.e., the mean should be removed first. The reason is shown in Fig. \ref{figure_centered_data}.
In some applications, centering the data does not make sense. For example, in natural language processing, the data are text and centering the data makes some negative measures which is non-sense for text. Therefore, data is not sometimes centered and PCA is applied on the non-centered data. This method is called Latent Semantic Indexing (LSI) or Latent Semantic Analysis (LSA) \cite{dumais2004latent}.

If we stack the $n$ data points column-wise in a matrix $\b{X} = [\b{x}_1, \dots, \b{x}_n] \in \mathbb{R}^{d \times n}$, we first center them:
\begin{align}\label{equation_centered_training_data}
\mathbb{R}^{d \times n} \ni \breve{\b{X}} := \b{X} \b{H} = \b{X} - \b{\mu}_x,
\end{align}
where $\breve{\b{X}} = [\breve{\b{x}}_1, \dots, \breve{\b{x}}_n] = [\b{x}_1 - \b{\mu}_x, \dots, \b{x}_n - \b{\mu}_x]$ is the centered data and: 
\begin{align}
\mathbb{R}^{n \times n} \ni \b{H} := \b{I} - (1/n) \b{1}\b{1}^\top,
\end{align}
is the centering matrix.
See Appendix \ref{section_appendix_centering} for more details about the centering matrix.

The projection and reconstruction, Eqs. (\ref{equation_projection_onePoint}) and (\ref{equation_reconstruction_onePoint}), for the whole training data are:
\begin{align}
&\mathbb{R}^{p \times n} \ni \widetilde{\b{X}} := \b{U}^\top \breve{\b{X}}, \label{equation_projection_severalPoint} \\
&\mathbb{R}^{d \times n} \ni \widehat{\b{X}} := \b{U}\b{U}^\top \breve{\b{X}} + \b{\mu}_x = \b{U}\widetilde{\b{X}} + \b{\mu}_x, \label{equation_reconstruction_severalPoint}
\end{align}
where $\widetilde{\b{X}} = [\widetilde{\b{x}}_1, \dots, \widetilde{\b{x}}_n]$ and $\widehat{\b{X}} = [\widehat{\b{x}}_1, \dots, \widehat{\b{x}}_n]$ are the projected data onto PCA subspace and the reconstructed data, respectively.

We can also project a new data point onto the PCA subspace for $\b{X}$ where the new data point is not a column of $\b{X}$. In other words, the new data point has not had impact in constructing the PCA subspace. This new data point is also referred to as ``test data point'' or ``out-of-sample data'' in the literature.
The Eq. (\ref{equation_projection_severalPoint}) was for projection of $\b{X}$ onto its PCA subspace. If $\b{x}_t$ denotes an out-of-sample data point, its projection onto the PCA subspace ($\widetilde{\b{x}}_t$) and its reconstruction ($\widehat{\b{x}}_t$) are:
\begin{align}
&\mathbb{R}^{p} \ni \widetilde{\b{x}}_t = \b{U}^\top \breve{\b{x}}_t, \label{equation_outOfSample_projection_PCA} \\
&\mathbb{R}^{d} \ni \widehat{\b{x}}_t = \b{U} \b{U}^\top \breve{\b{x}}_t + \b{\mu}_x = \b{U} \widetilde{\b{x}}_t + \b{\mu}_x, \label{equation_outOfSample_reconstruct_PCA}
\end{align}
where:
\begin{align}
\mathbb{R}^{d} \ni \breve{\b{x}}_t := \b{x}_t - \b{\mu}_x,
\end{align}
is the centered out-of-sample data point which is centered using the mean of training data. Note that for centering the out-of-sample data point(s), we should use the mean of the training data and not the out-of-sample data.

If we consider the $n_t$ out-of-sample data points, $\mathbb{R}^{d \times n_t} \ni \b{X}_t = [\b{x}_{t,1}, \dots, \b{x}_{t,n_t}]$, all together, the projection and reconstruction of them are:
\begin{align}
&\mathbb{R}^{p \times n_t} \ni \widetilde{\b{X}}_t = \b{U}^\top \breve{\b{X}}_t, \\
&\mathbb{R}^{d \times n_t} \ni \widehat{\b{X}}_t = \b{U} \b{U}^\top \breve{\b{X}}_t + \b{\mu}_x = \b{U} \widetilde{\b{X}}_t + \b{\mu}_x, 
\end{align}
respectively, where:
\begin{align}
\mathbb{R}^{d \times n_t} \ni \breve{\b{X}}_t := \b{X}_t - \b{\mu}_x.
\end{align}

\subsection{PCA Using Eigen-Decomposition}

\subsubsection{Projection Onto One Direction}

In Eq. (\ref{equation_reconstruction_onePoint}), if $p=1$, we are projecting $\b{x}$ onto only one vector $\b{u}$ and reconstruct it. If we ignore adding the mean back, we have:
\begin{align*}
\widehat{\b{x}} = \b{u}\b{u}^\top \breve{\b{x}}.
\end{align*}
The squared length (squared $\ell_2$-norm) of this reconstructed vector is:
\begin{align}
&||\widehat{\b{x}}||_2^2 = ||\b{u}\b{u}^\top \breve{\b{x}}||_2^2 = (\b{u}\b{u}^\top \breve{\b{x}})^\top (\b{u}\b{u}^\top \breve{\b{x}}) \nonumber \\
& = \breve{\b{x}}^\top \b{u} \underbrace{\b{u}^\top \b{u}}_{1} \b{u}^\top \breve{\b{x}} \overset{(a)}{=} \breve{\b{x}}^\top \b{u}\, \b{u}^\top \breve{\b{x}} \overset{(b)}{=} \b{u}^\top \breve{\b{x}}\, \breve{\b{x}}^\top \b{u}, \label{equation_x_hat_length_squared}
\end{align}
where $(a)$ is because $\b{u}$ is a unit (normal) vector, i.e., $\b{u}^\top \b{u} = ||\b{u}||_2^2 = 1$, and $(b)$ is because $\breve{\b{x}}^\top \b{u} = \b{u}^\top \breve{\b{x}} \in \mathbb{R}$.

Suppose we have $n$ data points $\{\b{x}_i\}_{i=1}^n$ where $\{\breve{\b{x}}_i\}_{i=1}^n$ are the centered data.
The summation of the squared lengths of their projections $\{\widehat{\b{x}}_i\}_{i=1}^n$ is:
\begin{align}\label{equation_sum_projected}
\sum_{i=1}^n ||\widehat{\b{x}}_i||_2^2 \overset{(\ref{equation_x_hat_length_squared})}{=} \sum_{i=1}^n \b{u}^\top \breve{\b{x}}_i\, \breve{\b{x}}_i^\top \b{u} = \b{u}^\top \Big(\sum_{i=1}^n \breve{\b{x}}_i\, \breve{\b{x}}_i^\top\Big) \b{u}.
\end{align}
Considering $\breve{\b{X}} = [\breve{\b{x}}_1, \dots, \breve{\b{x}}_n] \in \mathbb{R}^{d \times n}$, we have:
\begin{align}\label{equation_covariance_matrix}
\mathbb{R}^{d \times d} \ni \b{S} &:= \sum_{i=1}^n \breve{\b{x}}_i\, \breve{\b{x}}_i^\top = \breve{\b{X}} \breve{\b{X}}^\top 
\overset{(\ref{equation_centered_training_data})}{=} \b{X}\b{H} \b{H}^\top\b{X}^\top \nonumber \\
&\overset{(\ref{equation_centeringMatrix_is_symmetric})}{=} \b{X}\b{H} \b{H}\b{X}^\top \overset{(\ref{equation_centeringMatrix_is_idempotent})}{=} \b{X}\b{H}\b{X}^\top, 
\end{align}
where $\b{S}$ is called the ``covariance matrix''. If the data were already centered, we would have $\b{S} = \b{X} \b{X}^\top$.

Plugging Eq. (\ref{equation_covariance_matrix}) in Eq. (\ref{equation_sum_projected}) gives us:
\begin{align}
\sum_{i=1}^n ||\widehat{\b{x}}_i||_2^2 = \b{u}^\top \b{S} \b{u}.
\end{align}
Note that we can also say that $\b{u}^\top \b{S} \b{u}$ is the variance of the projected data onto PCA subspace. In other words, $\b{u}^\top \b{S} \b{u} = \mathbb{V}\text{ar}(\b{u}^\top \breve{\b{X}})$. This makes sense because when some non-random thing (here $\b{u}$) is multiplied to the random data (here $\breve{\b{X}}$), it will have squared (quadratic) effect on variance, and $\b{u}^\top \b{S} \b{u}$ is quadratic in $\b{u}$.

Therefore, $\b{u}^\top \b{S} \b{u}$ can be interpreted in two ways: (I) the squared length of reconstruction and (II) the variance of projection.

We want to find a projection direction $\b{u}$ which maximizes the squared length of reconstruction (or variance of projection):
\begin{equation}
\begin{aligned}
& \underset{\b{u}}{\text{maximize}}
& & \b{u}^\top \b{S} \b{u}, \\
& \text{subject to}
& & \b{u}^\top \b{u} = 1,
\end{aligned}
\end{equation}
where the constraint ensures that the $\b{u}$ is a unit (normal) vector as we assumed beforehand.

Using Lagrange multiplier \cite{boyd2004convex}, we have:
\begin{align*}
\mathcal{L} = \b{u}^\top \b{S} \b{u} - \lambda (\b{u}^\top \b{u} - 1),
\end{align*}
Taking derivative of the Lagrangian and setting it to zero gives:
\begin{align}\label{equation_pca_eigendecomposition_1}
& \mathbb{R}^p \ni \frac{\partial \mathcal{L}}{\partial \b{u}} = 2\b{S} \b{u} - 2\lambda \b{u} \overset{\text{set}}{=} 0 \implies \b{S} \b{u} = \lambda \b{u}. 
\end{align}
The Eq. (\ref{equation_pca_eigendecomposition_1}) is the eigen-decomposition of $\b{S}$ where $\b{u}$ and $\lambda$ are the leading eigenvector and eigenvalue of $\b{S}$, respectively \cite{ghojogh2019eigenvalue}. Note that the leading eigenvalue is the largest one. The reason of being leading is that we are maximizing in the optimization problem. 
As a conclusion, if projecting onto one PCA direction, the PCA direction $\b{u}$ is the leading eigenvector of the covariance matrix.
Note that the ``PCA direction'' is also called ``principal direction'' or ``principal axis'' in the literature. The dimensions (features) of the projected data onto PCA subspace are called ``principal components''.

\subsubsection{Projection Onto Span of Several Directions}

In Eq. (\ref{equation_reconstruction_onePoint}) or (\ref{equation_reconstruction_severalPoint}), if $p>1$, we are projecting $\breve{\b{x}}$ or $\breve{\b{X}}$ onto PCA subspace with dimensionality more than one and then reconstruct back. If we ignore adding the mean back, we have:
\begin{align*}
\widehat{\b{X}} = \b{U} \b{U}^\top \breve{\b{X}}.
\end{align*}
It means that we project every column of $\breve{\b{X}}$, i.e., $\breve{\b{x}}$, onto a space spanned by the $p$ vectors $\{\b{u}_1, \dots, \b{u}_p\}$ each of which is $d$-dimensional. Therefore, the projected data are $p$-dimensional and the reconstructed data are $d$-dimensional.

The squared length (squared Frobenius Norm) of this reconstructed matrix is:
\begin{align*}
||\widehat{\b{X}}||_F^2 &= ||\b{U}\b{U}^\top \breve{\b{X}}||_F^2 = \textbf{tr}\big((\b{U}\b{U}^\top \breve{\b{X}})^\top (\b{U}\b{U}^\top \breve{\b{X}})\big) \\
& = \textbf{tr}(\breve{\b{X}}^\top \b{U} \underbrace{\b{U}^\top \b{U}}_{\b{I}} \b{U}^\top \breve{\b{X}}) \overset{(a)}{=} \textbf{tr}(\breve{\b{X}}^\top \b{U} \b{U}^\top \breve{\b{X}}) \\
& \overset{(b)}{=} \textbf{tr}(\b{U}^\top \breve{\b{X}} \breve{\b{X}}^\top \b{U}),
\end{align*}
where $\textbf{tr}(.)$ denotes the trace of matrix, $(a)$ is because $\b{U}$ is an orthogonal matrix (its columns are orthonormal), and $(b)$ is because $\textbf{tr}(\breve{\b{X}}^\top \b{U} \b{U}^\top \breve{\b{X}}) = \textbf{tr}(\breve{\b{X}} \breve{\b{X}}^\top \b{U} \b{U}^\top) = \textbf{tr}(\b{U}^\top \breve{\b{X}} \breve{\b{X}}^\top \b{U})$.
According to Eq. (\ref{equation_covariance_matrix}), the $\b{S} = \breve{\b{X}} \breve{\b{X}}^\top$ is the covariance matrix; therefore:
\begin{align}
||\widehat{\b{X}}||_F^2 = \textbf{tr}(\b{U}^\top \b{S}\, \b{U}).
\end{align}

We want to find several projection directions $\{\b{u}_1, \dots, \b{u}_p\}$, as columns of $\b{U} \in \mathbb{R}^{d \times p}$, which maximize the squared length of reconstruction (or variance of projection):
\begin{equation}
\begin{aligned}
& \underset{\b{U}}{\text{maximize}}
& & \textbf{tr}(\b{U}^\top \b{S}\, \b{U}), \\
& \text{subject to}
& & \b{U}^\top \b{U} = \b{I},
\end{aligned}
\end{equation}
where the constraint ensures that the $\b{U}$ is an orthogonal matrix as we assumed beforehand.

Using Lagrange multiplier \cite{boyd2004convex}, we have:
\begin{align*}
\mathcal{L} = \textbf{tr}(\b{U}^\top \b{S}\, \b{U}) - \textbf{tr}\big(\b{\Lambda}^\top (\b{U}^\top \b{U} - \b{I})\big),
\end{align*}
where $\b{\Lambda} \in \mathbb{R}^{p \times p}$ is a diagonal matrix $\textbf{diag}([\lambda_1, \dots, \lambda_p]^\top)$ including the Lagrange multipliers.
\begin{align}
& \mathbb{R}^{d \times p} \ni \frac{\partial \mathcal{L}}{\partial \b{U}} = 2\b{S} \b{U} - 2\b{U} \b{\Lambda} \overset{\text{set}}{=} 0 \nonumber \\ 
& \implies \b{S} \b{U} = \b{U} \b{\Lambda}. \label{equation_pca_eigendecomposition_2}
\end{align}
The Eq. (\ref{equation_pca_eigendecomposition_2}) is the eigen-decomposition of $\b{S}$ where the columns of $\b{U}$ and the diagonal of $\b{\Lambda}$ are the eigenvectors and eigenvalues of $\b{S}$, respectively \cite{ghojogh2019eigenvalue}. The eigenvectors and eigenvalues are sorted from the leading (largest eigenvalue) to the trailing (smallest eigenvalue) because we are maximizing in the optimization problem.
As a conclusion, if projecting onto the PCA subspace or $\textbf{span}\{\b{u}_1, \dots, \b{u}_p\}$, the PCA directions $\{\b{u}_1, \dots, \b{u}_p\}$ are the sorted eigenvectors of the covariance matrix of data $\b{X}$. 

\subsection{Properties of $\b{U}$}

\subsubsection{Rank of the Covariance Matrix}

We consider two cases for $\breve{\b{X}} \in \mathbb{R}^{d \times n}$:

\begin{enumerate}
\item If the original dimensionality of data is greater than the number of data points, i.e., $d \geq n$: 
In this case, $\textbf{rank}(\breve{\b{X}}) = \textbf{rank}(\breve{\b{X}}^\top) \leq n$. Therefore, $\textbf{rank}(\b{S}) = \textbf{rank}(\breve{\b{X}}\breve{\b{X}}^\top) \leq \min\big(\textbf{rank}(\breve{\b{X}}), \textbf{rank}(\breve{\b{X}}^\top)\big)-1 = n-1$. Note that $-1$ is because the data are centered. For example, if we only have one data point, it becomes zero after centering and the rank should be zero.
\item If the original dimensionality of data is less than the number of data points, i.e., $d \leq n-1$ (the $-1$ again is because of centering the data): 
In this case, $\textbf{rank}(\breve{\b{X}}) = \textbf{rank}(\breve{\b{X}}^\top) \leq d$. Therefore, $\textbf{rank}(\b{S}) = \textbf{rank}(\breve{\b{X}}\breve{\b{X}}^\top) \leq \min\big(\textbf{rank}(\breve{\b{X}}), \textbf{rank}(\breve{\b{X}}^\top)\big) = d$. 
\end{enumerate}
So, we either have $\textbf{rank}(\b{S}) \leq n-1$ or $\textbf{rank}(\b{S}) \leq d$.

\subsubsection{Truncating $\b{U}$}

Consider the following cases:

\begin{figure}[!t]
\centering
\includegraphics[width=3in]{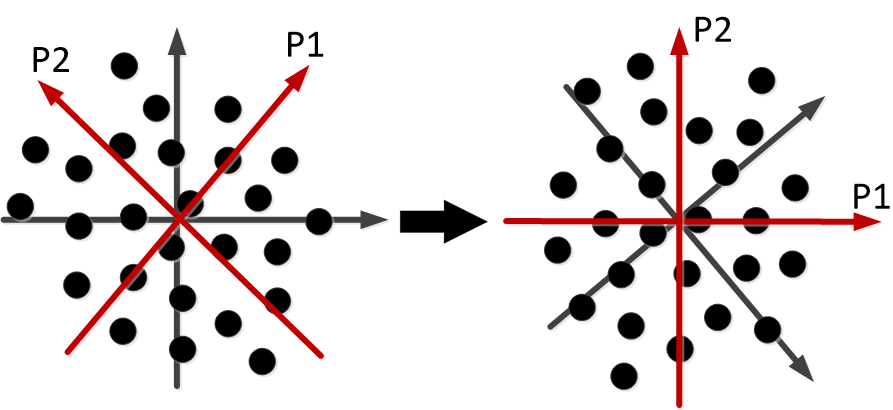}
\caption{Rotation of coordinates because of PCA.}
\label{figure_coordinate_rotation}
\end{figure}

\begin{enumerate}
\item If $\textbf{rank}(\b{S}) = d$: 
we have $p=d$ (we have $d$ non-zero eigenvalues of $\b{S}$), so that $\b{U} \in \mathbb{R}^{d \times d}$. It means that the dimensionality of the PCA subspace is $d$, equal to the dimensionality of the original space. Why does this happen? That is because $\textbf{rank}(\b{S}) = d$ means that the data are spread wide enough in all dimensions of the original space up to a possible rotation (see Fig. \ref{figure_coordinate_rotation}). Therefore, the dimensionality of PCA subspace is equal to the original dimensionality; however, PCA might merely rotate the coordinate axes. In this case, $\b{U} \in \mathbb{R}^{d \times d}$ is a square orthogonal matrix so that $\mathbb{R}^{d \times d} \ni \b{U}\b{U}^\top = \b{U}\b{U}^{-1} = \b{I}$ and $\mathbb{R}^{d \times d} \ni \b{U}^\top\b{U} = \b{U}^{-1}\b{U} = \b{I}$ because $\textbf{rank}(\b{U}) = d$, $\textbf{rank}(\b{U}\b{U}^\top) = d$, and $\textbf{rank}(\b{U}^\top\b{U}) = d$.
That is why in the literature, PCA is also referred to as coordinate rotation. 
\item If $\textbf{rank}(\b{S}) < d$ and $n > d$: 
it means that we have enough data points but the data points exist on a subspace and do not fill the original space wide enough in every direction. In this case, $\b{U} \in \mathbb{R}^{d \times p}$ is not square and $\textbf{rank}(\b{U}) = p < d$ (we have $p$ non-zero eigenvalues of $\b{S}$). Therefore, $\mathbb{R}^{d \times d} \ni \b{U}\b{U}^\top \neq \b{I}$ and $\mathbb{R}^{p \times p} \ni \b{U}^\top \b{U} = \b{I}$ because $\textbf{rank}(\b{U}) = p$, $\textbf{rank}(\b{U}\b{U}^\top) = p < d$, and $\textbf{rank}(\b{U}^\top\b{U}) = p$.
\item If $\textbf{rank}(\b{S}) \leq n-1 < d$: 
it means that we do not have enough data points to properly represent the original space and the points have an ``intrinsic dimensionality''. For example, we have two three-dimensional points which are one a two-dimensional line (subspace). So, similar to previous case, the data points exist on a subspace and do not fill the original space wide enough in every direction. The discussions about $\b{U}$, $\b{U}\b{U}^\top$, and $\b{U}^\top\b{U}$ are similar to previous case.
\end{enumerate}

Note that we might have $\textbf{rank}(\b{S}) = d$ and thus $\b{U} \in \mathbb{R}^{d \times d}$ but want to ``truncate'' the matrix $\b{U}$ to have $\b{U} \in \mathbb{R}^{d \times p}$. Truncating $\b{U}$ means that we take a subset of best (leading) eigenvectors rather than the whole $d$ eigenvectors with non-zero eigenvalues. In this case, again we have $\b{U}\b{U}^\top \neq \b{I}$ and $\b{U}^\top \b{U} = \b{I}$.
The intuition of truncating is this: the variance of data might be noticeably smaller than another direction; in this case, we can only keep the $p<d$ top eigenvectors (PCA directions) and ``ignore'' the PCA directions with smaller eigenvalues to have $\b{U} \in \mathbb{R}^{d \times p}$. Figure \ref{figure_PCA_oneDirection} illustrates this case for a 2D example.
Note that truncating can also be done, when $\b{U} \in \mathbb{R}^{d \times p}$, to have $\b{U} \in \mathbb{R}^{d \times q}$ where $p$ is the number of non-zero eigenvalues of $\b{S}$ and $q < p$.

From all the above analyses, we conclude that as long as the columns of the matrix $\b{U} \in \mathbb{R}^{d \times p}$ are orthonormal, we always have $\b{U}^\top \b{U} = \b{I}$ regardless of the value $p$. If the orthogonal matrix $\b{U}$ is not truncated and thus is a square matrix, we also have $\b{U} \b{U}^\top = \b{I}$.

\begin{figure}[!t]
\centering
\includegraphics[width=1.6in]{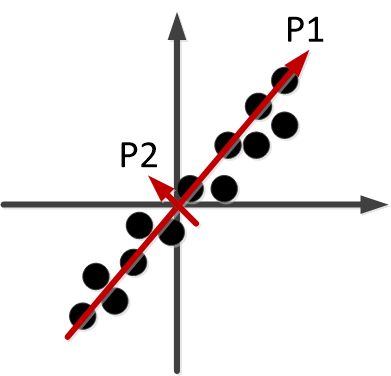}
\caption{A 2D example where the data is almost on a line and the second principal direction is very small and can be ignored.}
\label{figure_PCA_oneDirection}
\end{figure}

\subsection{Reconstruction Error in PCA}\label{section_PCA_reconstruction}

\subsubsection{Reconstruction in Linear Projection}

If we center the data, the Eq. (\ref{equation_residual_1}) becomes $\b{r} = \breve{\b{x}} - \widehat{\b{x}}$ because the reconstructed data will also be centered according to Eq. (\ref{equation_reconstruction_onePoint}). 
According to Eqs. (\ref{equation_residual_1}), (\ref{equation_mean_removed_x}), and (\ref{equation_reconstruction_onePoint}), we have:
\begin{align}
\b{r} = \b{x} - \widehat{\b{x}} = \breve{\b{x}} + \b{\mu}_x - \b{U} \b{U}^\top \breve{\b{x}} - \b{\mu}_x = \breve{\b{x}} - \b{U} \b{U}^\top \breve{\b{x}}.
\end{align}
Figure \ref{figure_reconstruction_error} shows the projection of a two-dimensional point (after the data being centered) onto the first principal direction, its reconstruction, and its reconstruction error. As can be seen in this figure, the reconstruction error is different from least square error in linear regression.

\begin{figure}[!t]
\centering
\includegraphics[width=3.1in]{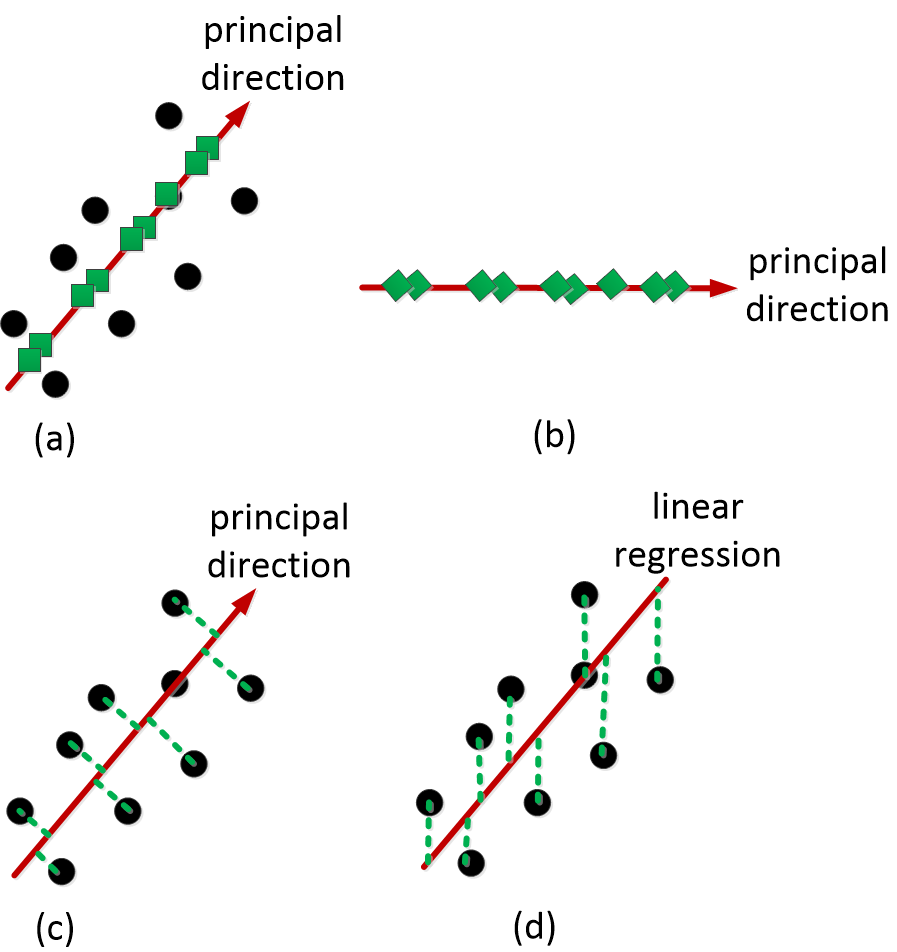}
\caption{(a) Projection of the black circle data points onto the principal direction where the green square data points are the projected data. (b) The reconstruction coordinate of the data points. (c) The reconstruction error in PCA. (d) The least square error in linear regression.}
\label{figure_reconstruction_error}
\end{figure}

For $n$ data points, we have:
\begin{align}
\b{R} &:= \b{X} - \widehat{\b{X}} = \breve{\b{X}} + \b{\mu}_x - \b{U} \b{U}^\top \breve{\b{X}} - \b{\mu}_x \nonumber \\
&= \breve{\b{X}} - \b{U} \b{U}^\top \breve{\b{X}},
\end{align}
where $\mathbb{R}^{d \times n} \ni \b{R} = [\b{r}_1, \dots, \b{r}_n]$ is the matrix of residuals.

If we want to minimize the reconstruction error subject to the orthogonality of the projection matrix $\b{U}$, we have:
\begin{equation}
\begin{aligned}
& \underset{\b{U}}{\text{minimize}}
& & ||\breve{\b{X}} - \b{U}\b{U}^\top\breve{\b{X}}||_F^2, \\
& \text{subject to}
& & \b{U}^\top \b{U} = \b{I}.
\end{aligned}
\end{equation}
The objective function can be simplified:
\begin{align*}
&||\breve{\b{X}} - \b{U}\b{U}^\top\breve{\b{X}}||_F^2 \\
&= \textbf{tr}\big((\breve{\b{X}} - \b{U}\b{U}^\top\breve{\b{X}})^\top (\breve{\b{X}} - \b{U}\b{U}^\top\breve{\b{X}})\big) \\
&= \textbf{tr}\big((\breve{\b{X}}^\top - \breve{\b{X}}^\top\b{U}\b{U}^\top) (\breve{\b{X}} - \b{U}\b{U}^\top\breve{\b{X}})\big) \\
&= \textbf{tr}(\breve{\b{X}}^\top\breve{\b{X}}-2\breve{\b{X}}^\top\b{U}\b{U}^\top\breve{\b{X}} + \breve{\b{X}}^\top \b{U} \underbrace{\b{U}^\top \b{U}}_{\b{I}} \b{U}^\top \breve{\b{X}}) \\
&= \textbf{tr}(\breve{\b{X}}^\top\breve{\b{X}}-\breve{\b{X}}^\top\b{U}\b{U}^\top\breve{\b{X}}) \\
&= \textbf{tr}(\breve{\b{X}}^\top\breve{\b{X}})-\textbf{tr}(\breve{\b{X}}^\top\b{U}\b{U}^\top\breve{\b{X}}) \\
&= \textbf{tr}(\breve{\b{X}}^\top\breve{\b{X}})-\textbf{tr}(\breve{\b{X}}\breve{\b{X}}^\top\b{U}\b{U}^\top).
\end{align*}
Using Lagrange multiplier \cite{boyd2004convex}, we have:
\begin{align*}
\mathcal{L} = &\,\textbf{tr}(\breve{\b{X}}^\top\breve{\b{X}})-\textbf{tr}(\breve{\b{X}}\breve{\b{X}}^\top\b{U}\b{U}^\top) \\
&+ \textbf{tr}\big(\b{\Lambda}^\top (\b{U}^\top \b{U} - \b{I})\big),
\end{align*}
where $\b{\Lambda} \in \mathbb{R}^{p \times p}$ is a diagonal matrix $\textbf{diag}([\lambda_1, \dots, \lambda_p]^\top)$ containing the Lagrange multipliers.
Equating the derivative of Lagrangian to zero gives:
\begin{align}
& \mathbb{R}^{d \times p} \ni \frac{\partial \mathcal{L}}{\partial \b{U}} = -2\breve{\b{X}}\breve{\b{X}}^\top \b{U} + 2\b{U} \b{\Lambda} \overset{\text{set}}{=} 0 \nonumber \\ 
& \implies \breve{\b{X}}\breve{\b{X}}^\top \b{U} = \b{U} \b{\Lambda}, \nonumber \\
& \overset{(\ref{equation_covariance_matrix})}{\implies} \b{S} \b{U} = \b{U} \b{\Lambda},
\end{align}
which is again the eigenvalue problem \cite{ghojogh2019eigenvalue} for the covariance matrix $\b{S}$. We had the same eigenvalue problem in PCA. Therefore, \textit{PCA subspace is the best linear projection in terms of reconstruction error. In other words, PCA has the least squared error in reconstruction.}

\subsubsection{Reconstruction in Autoencoder}

\begin{figure}[!t]
\centering
\includegraphics[width=2.3in]{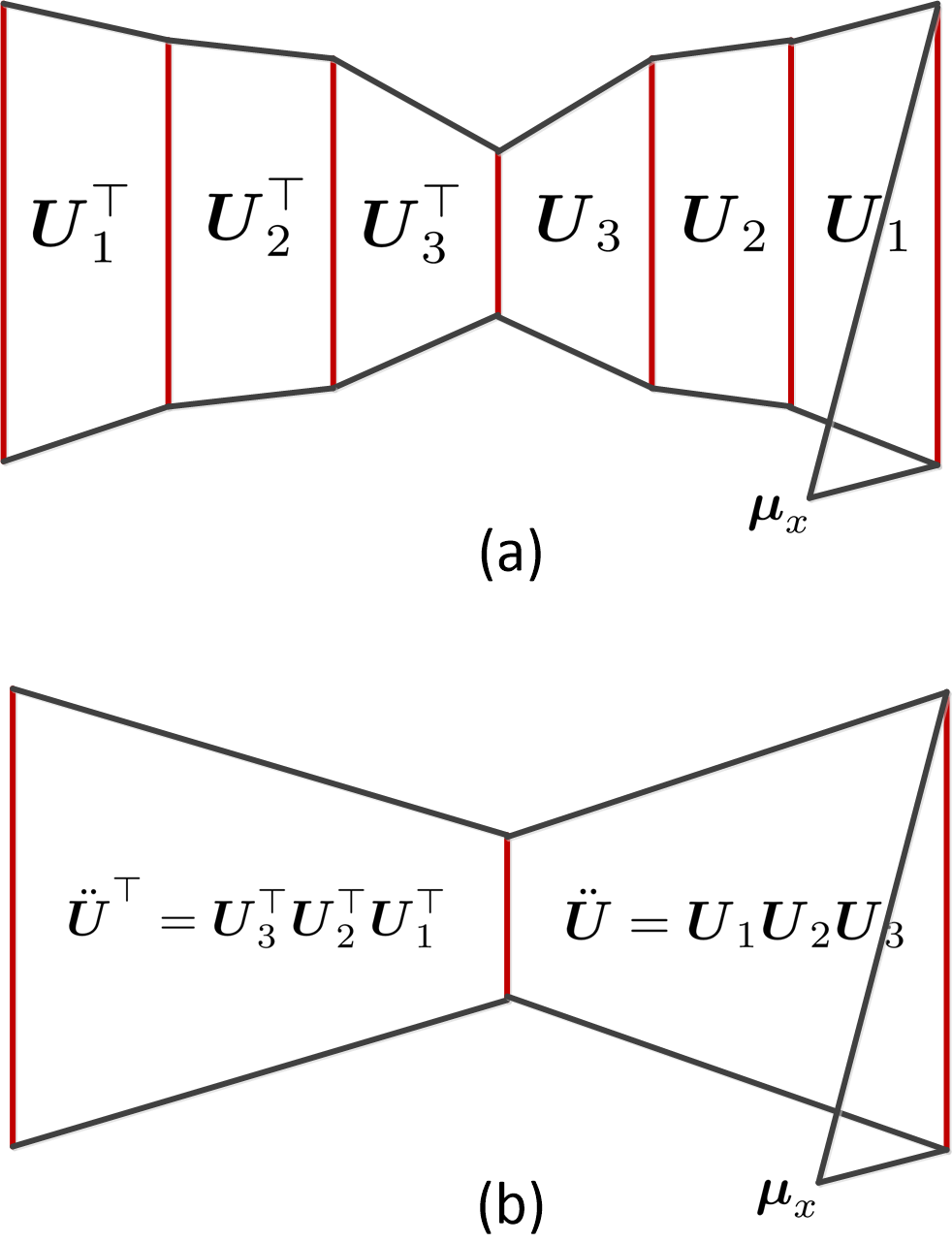}
\caption{(a) An example of autoencoder with five hidden layers and linear activation functions, and (b) its reduction to an autoencoder with one hidden layer.}
\label{figure_autoencoder}
\end{figure}

We saw that PCA is the best in reconstruction error for \textit{linear} projection. If we have $m>1$ successive linear projections, the reconstruction is:
\begin{align}
\widehat{\b{X}} = \underbrace{\b{U}_1 \cdots \b{U}_m}_\text{reconstruct} \underbrace{\b{U}_m^\top \cdots \b{U}_1^\top}_\text{project} \breve{\b{X}} + \b{\mu}_x,
\end{align}
which can be seen as an undercomplete \textit{autoencoder} \cite{goodfellow2016deep} with $2m$ layers without activation function (or with identity activation functions $f(\b{x}) = \b{x}$). The $\b{\mu}_x$ is modeled by the intercepts included as input to the neurons of autoencoder layers. Figure \ref{figure_autoencoder} shows this autoencoder.
As we do not have any non-linearity between the projections, we can define:
\begin{align}
&\ddot{\b{U}} := \b{U}_1 \cdots \b{U}_m \implies \ddot{\b{U}}^\top = \b{U}_m^\top \cdots \b{U}_1^\top, \\
&\therefore ~~~~ \widehat{\b{X}} = \ddot{\b{U}} \ddot{\b{U}}^\top \b{X} + \b{\mu}_x. \label{equation_linear_autoencoder_reconstruction}
\end{align}
The Eq. (\ref{equation_linear_autoencoder_reconstruction}) shows that the whole autoencoder can be reduced to an undercomplete autoencoder with one hidden layer where the weight matrix is $\ddot{\b{U}}$ (see Fig. \ref{figure_autoencoder}).
In other words, in autoencoder neural network, every layer excluding the activation function behaves as a linear projection.

Comparing the Eqs. (\ref{equation_reconstruction_severalPoint}) and (\ref{equation_linear_autoencoder_reconstruction}) shows that the whole autoencoder is reduced to PCA.
Therefore, \textit{PCA is equivalent to an undercomplete autoencoder with one hidden layer without activation function}. Therefore, if we trained weights of such an autoencoder by back-propagation \cite{rumelhart1986learning} are roughly equal to the PCA directions. 
Moreover, as PCA is the best linear projection in terms of reconstruction error, \textit{if we have an undercomplete autoencoder with ``one'' hidden layer, it is best not to use any activation function}; this is not noticed by some papers in the literature, unfortunately.

We saw that an autoencoder with $2m$ hidden layers without activation function reduces to linear PCA.
This explains why in autoencoders with more than one layer, we use non-linear activation function $f(.)$ as:
\begin{align}
&\widehat{\b{X}} = f^{-1}(\b{U}_1 \dots f^{-1}(\b{U}_m f(\b{U}_m^\top \nonumber \\
&~~~~~~~~~~~~~~~~~~~ \dots f(\b{U}_1^\top \b{X})\dots))\dots) + \b{\mu}_x.
\end{align}

\subsection{PCA Using Singular Value Decomposition}

The PCA can be done using Singular Value Decomposition (SVD) of $\breve{\b{X}}$, rather than eigen-decomposition of $\b{S}$. Consider the complete SVD of $\breve{\b{X}}$ (see Appendix \ref{section_appendix_SVD}):
\begin{align}
\mathbb{R}^{d \times n} \ni \breve{\b{X}} = \b{U}\b{\Sigma}\b{V}^\top,
\end{align}
where the columns of $\b{U} \in \mathbb{R}^{d \times d}$ (called left singular vectors) are the eigenvectors of $\breve{\b{X}}\breve{\b{X}}^\top$, the columns of $\b{V} \in \mathbb{R}^{n \times n}$ (called right singular vectors) are the eigenvectors of $\breve{\b{X}}^\top \breve{\b{X}}$, and the $\b{\Sigma} \in \mathbb{R}^{d \times n}$ is a rectangular diagonal matrix whose diagonal entries (called singular values) are the square root of eigenvalues of $\breve{\b{X}}\breve{\b{X}}^\top$ and/or $\breve{\b{X}}^\top \breve{\b{X}}$. 
See Proposition \ref{proposition_SVD} in Appendix \ref{section_appendix_SVD} for proof of this claim.

According to Eq. (\ref{equation_covariance_matrix}), the $\breve{\b{X}}\breve{\b{X}}^\top$ is the covariance matrix $\b{S}$. In Eq. (\ref{equation_pca_eigendecomposition_2}), we saw that the eigenvectors of $\b{S}$ are the principal directions. On the other hand, here, we saw that the columns of $\b{U}$ are the eigenvectors of $\breve{\b{X}}\breve{\b{X}}^\top$.
Hence, we can apply SVD on $\breve{\b{X}}$ and take the left singular vectors (columns of $\b{U}$) as the principal directions.  

An interesting thing is that in SVD of $\breve{\b{X}}$, the columns of $\b{U}$ are automatically sorted from largest to smallest singular values (eigenvalues) and we do not need to sort as we did in using eigenvalue decomposition for the covariance matrix.

\subsection{Determining the Number of Principal Directions}

Usually in PCA, the components with smallest eigenvalues are cut off to reduce the data. There are different methods for estimating the best number of components to keep (denoted by $p$), such as using Bayesian model selection \cite{minka2001automatic}, scree plot \cite{cattell1966scree}, and comparing the ratio $\lambda_j / \sum_{k=1}^d \lambda_k$ with a threshold \cite{abdi2010principal} where $\lambda_i$ denotes the eigenvalue related to the $j$-th principal component.
Here, we explain the two methods of scree plot and the ratio.

The scree plot \cite{cattell1966scree} is a plot of the eigenvalues versus sorted components from the leading (having largest eigenvalue) to trailing (having smallest eigenvalue). A threshold for the vertical (eigenvalue) axis chooses the components with the large enough eigenvalues and removes the rest of the components.
A good threshold is where the eigenvalue drops significantly. In most of the datasets, a significant drop of eigenvalue occurs.

Another way to choose the best components is the ratio \cite{abdi2010principal}:
\begin{align}
\frac{\lambda_j}{\sum_{k=1}^d \lambda_k},
\end{align}
for the $j$-th component. Then, we sort the features from the largest to smallest ratio and select the $p$ best components or up to the component where a significant drop of the ratio happens.

\section{Dual Principal Component Analysis}

Assume the case where the dimensionality of data is high and much greater than the sample size, i.e., $d \gg n$.
In this case, consider the incomplete SVD of $\breve{\b{X}}$ (see Appendix \ref{section_appendix_SVD}):
\begin{align}\label{equation_SVD_dual}
\breve{\b{X}} = \b{U}\b{\Sigma}\b{V}^\top,
\end{align}
where here, $\b{U} \in \mathbb{R}^{d \times p}$ and $\b{V} \in \mathbb{R}^{n \times p}$ contain the $p$ leading left and right singular vectors of $\breve{\b{X}}$, respectively, where $p$ is the number of ``non-zero'' singular values of $\breve{\b{X}}$ and usually $p \ll d$. Here, the $\b{\Sigma} \in \mathbb{R}^{p \times p}$ is a square matrix having the $p$ largest non-zero singular values of $\breve{\b{X}}$. As the $\b{\Sigma}$ is a square diagonal matrix and its diagonal includes non-zero entries (is full-rank), it is invertible \cite{ghodsi2006dimensionality}.
Therefore, $\b{\Sigma}^{-1} = \textbf{diag}([\frac{1}{\sigma_1}, \dots, \frac{1}{\sigma_p}]^\top)$ if we have $\b{\Sigma} = \textbf{diag}([\sigma_1, \dots, \sigma_p]^\top)$.

\subsection{Projection}

Recall Eq. (\ref{equation_projection_severalPoint}) for projection onto PCA subspace: $\widetilde{\b{X}} = \b{U}^\top \breve{\b{X}}$.
On the other hand, according to Eq. (\ref{equation_SVD_dual}), we have:
\begin{align}\label{equation_dual_1}
\breve{\b{X}} = \b{U}\b{\Sigma}\b{V}^\top \implies \b{U}^\top\breve{\b{X}} = \underbrace{\b{U}^\top\b{U}}_{\b{I}}\b{\Sigma}\b{V}^\top = \b{\Sigma}\b{V}^\top.
\end{align}
According to Eqs. (\ref{equation_projection_severalPoint}) and (\ref{equation_dual_1}), we have:
\begin{align}\label{equation_projected_dual}
\therefore ~~~~~~~ \widetilde{\b{X}} = \b{\Sigma}\b{V}^\top
\end{align}
The Eq. (\ref{equation_projected_dual}) can be used for projecting data onto PCA subspace instead of Eq. (\ref{equation_projection_severalPoint}). This is projection of training data in dual PCA.

\subsection{Reconstruction}

According to Eq. (\ref{equation_SVD_dual}), we have:
\begin{align}
\breve{\b{X}} = \b{U}\b{\Sigma}\b{V}^\top &\implies \breve{\b{X}}\b{V} = \b{U}\b{\Sigma}\underbrace{\b{V}^\top\b{V}}_{\b{I}} = \b{U}\b{\Sigma} \nonumber \\
&\implies \b{U} = \breve{\b{X}}\b{V}\b{\Sigma}^{-1}. \label{equation_dual_U}
\end{align}

Plugging Eq. (\ref{equation_dual_U}) in Eq. (\ref{equation_reconstruction_severalPoint}) gives us:
\begin{align}
\widehat{\b{X}} &= \b{U} \widetilde{\b{X}} + \b{\mu}_x \overset{(\ref{equation_dual_U})}{=} \breve{\b{X}}\b{V}\b{\Sigma}^{-1}\widetilde{\b{X}} + \b{\mu}_x \nonumber \\
&\overset{(\ref{equation_projected_dual})}{=} \breve{\b{X}}\b{V}\underbrace{\b{\Sigma}^{-1}\b{\Sigma}}_{\b{I}}\b{V}^\top + \b{\mu}_x \nonumber \\
&\implies \widehat{\b{X}} = \breve{\b{X}}\b{V}\b{V}^\top + \b{\mu}_x. \label{equation_reconstruction_dual}
\end{align}
The Eq. (\ref{equation_reconstruction_dual}) can be used for reconstruction of data instead of Eq. (\ref{equation_reconstruction_severalPoint}).
This is reconstruction of training data in dual PCA.

\subsection{Out-of-sample Projection}

Recall Eq. (\ref{equation_outOfSample_projection_PCA}) for projection of an out-of-sample point $\b{x}_t$ onto PCA subspace.
According to Eq. (\ref{equation_dual_U}), we have:
\begin{align}
&\b{U}^\top \overset{(\ref{equation_dual_U})}{=} \b{\Sigma}^{-\top}\b{V}^\top\breve{\b{X}}^\top \overset{(a)}{=} \b{\Sigma}^{-1}\b{V}^\top\breve{\b{X}}^\top \label{equation_U_transpose_dual}\\
&\overset{(\ref{equation_outOfSample_projection_PCA})}{\implies} \widetilde{\b{x}}_t = \b{\Sigma}^{-1}\b{V}^\top\breve{\b{X}}^\top \breve{\b{x}}_t, \label{equation_outOfSample_projection_dual}
\end{align}
where $(a)$ is because $\b{\Sigma}^{-1}$ is diagonal and thus symmetric.
The Eq. (\ref{equation_outOfSample_projection_dual}) can be used for projecting out-of-sample data point onto PCA subspace instead of Eq. (\ref{equation_outOfSample_projection_PCA}).
This is out-of-sample projection in dual PCA.

Considering all the $n_t$ out-of-sample data points, the projection is:
\begin{align}
\widetilde{\b{X}}_t = \b{\Sigma}^{-1}\b{V}^\top\breve{\b{X}}^\top \breve{\b{X}}_t.
\end{align}

\subsection{Out-of-sample Reconstruction}

Recall Eq. (\ref{equation_outOfSample_reconstruct_PCA}) for reconstruction of an out-of-sample point $\b{x}_t$.
According to Eqs. (\ref{equation_dual_U}) and (\ref{equation_U_transpose_dual}), we have:
\begin{align}
&\b{U}\b{U}^\top = \breve{\b{X}}\b{V}\b{\Sigma}^{-1} \b{\Sigma}^{-1}\b{V}^\top\breve{\b{X}}^\top \nonumber\\
&\overset{(\ref{equation_outOfSample_reconstruct_PCA})}{\implies} \widehat{\b{x}}_t = \breve{\b{X}}\b{V}\b{\Sigma}^{-2}\b{V}^\top\breve{\b{X}}^\top \breve{\b{x}}_t + \b{\mu}_x. \label{equation_outOfSample_reconstruct_dual}
\end{align}
The Eq. (\ref{equation_outOfSample_reconstruct_dual}) can be used for reconstruction of an out-of-sample data point instead of Eq. (\ref{equation_outOfSample_reconstruct_PCA}). This is out-of-sample reconstruction in dual PCA.

Considering all the $n_t$ out-of-sample data points, the reconstruction is:
\begin{align}
\widehat{\b{X}}_t = \breve{\b{X}}\b{V}\b{\Sigma}^{-2}\b{V}^\top\breve{\b{X}}^\top \breve{\b{X}}_t + \b{\mu}_x.
\end{align}

\subsection{Why is Dual PCA Useful?}

The dual PCA can be useful for two reasons:
\begin{enumerate}
\item As can be seen in Eqs. (\ref{equation_projected_dual}), (\ref{equation_reconstruction_dual}), (\ref{equation_outOfSample_projection_dual}), and (\ref{equation_outOfSample_reconstruct_dual}), the formulae for dual PCA only include $\b{V}$ and not $\b{U}$. The columns of $\b{V}$ are the eigenvectors of $\breve{\b{X}}^\top \breve{\b{X}} \in \mathbb{R}^{n \times n}$ and the columns of $\b{U}$ are the eigenvectors of $\breve{\b{X}} \breve{\b{X}}^\top \in \mathbb{R}^{d \times d}$. In case the dimensionality of data is much high and greater than the sample size, i.e., $n \ll d$, computation of eigenvectors of $\breve{\b{X}}^\top \breve{\b{X}}$ is easier and faster than $\breve{\b{X}} \breve{\b{X}}^\top$ and also requires less storage. Therefore, dual PCA is more efficient than direct PCA in this case in terms of both speed and storage. Note that the results of PCA and dual PCA are exactly the same.
\item Some inner product forms, such as $\breve{\b{X}}^\top \breve{\b{x}}_t$, have appeared in the formulae of dual PCA. This provides opportunity for kernelizing the PCA to have kernel PCA using the so-called kernel trick. As will be seen in the next section, we use dual PCA in formulation of kernel PCA.
\end{enumerate}

\section{Kernel Principal Component Analysis}

\begin{figure}[!t]
\centering
\includegraphics[width=3.2in]{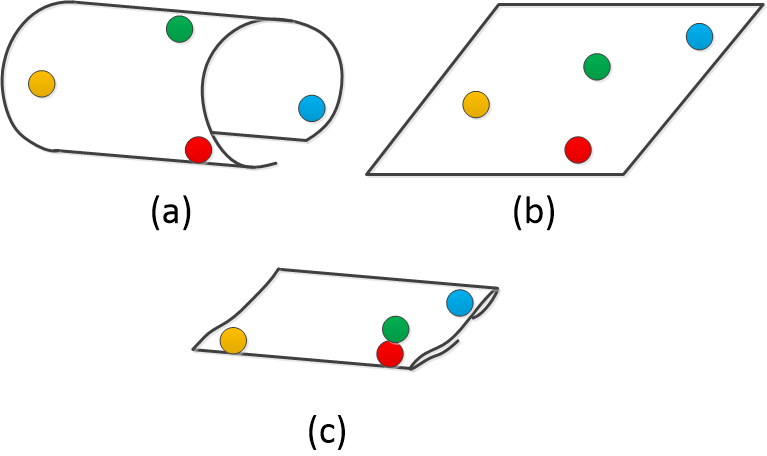}
\caption{(a) A 2D nonlinear manifold where the data exist on in the 3D original space. As the manifold is nonlinear, the geodesic distances of points on the manifold are different from their Euclidean distances. (b) The correct unfolded manifold where the geodesic distances of points on the manifold have been preserved. (c) Applying the linear PCA, which takes Euclidean distances into account, on the nonlinear data where the found subspace has ruined the manifold so the far away red and green points have fallen next to each other. The credit of this example is for Prof. Ali Ghodsi.}
\label{figure_nonlinear_manifold}
\end{figure}

The PCA is a linear method because the projection is linear. In case the data points exist on a non-linear sub-manifold, the linear subspace learning might not be completely effective. For example, see Fig. \ref{figure_nonlinear_manifold}.

In order to handle this problem of PCA, we have two options. We should either change PCA to become a nonlinear method or we can leave the PCA to be linear but change the data hoping to fall on a linear or close to linear manifold. Here, we do the latter so we change the data. We increase the dimensionality of data by mapping the data to feature space with higher dimensionality hoping that in the feature space, it falls on a linear manifold. This is referred to as ``blessing of dimensionality'' in the literature \cite{donoho2000high} which is pursued using kernels \cite{hofmann2008kernel}. This PCA method which uses the kernel of data is named ``kernel PCA'' \cite{scholkopf1997kernel}.

\subsection{Kernels and Hilbert Space}

Suppose that $\b{\phi}: \b{x} \rightarrow \mathcal{H}$ is a function which maps the data $\b{x}$ to Hilbert space (feature space). The $\b{\phi}$ is called ``pulling function''. In other words, $\b{x} \mapsto \b{\phi}(\b{x})$. Let $t$ denote the dimensionality of the feature space, i.e., $\b{\phi}(\b{x}) \in \mathbb{R}^t$ while $\b{x} \in \mathbb{R}^d$. Note that we usually have $t \gg d$.

If $\mathcal{X}$ denotes the set of points, i.e., $\b{x} \in \mathcal{X}$, the kernel of two vectors $\b{x}_1$ and $\b{x}_2$ is $k: \mathcal{X} \times \mathcal{X} \rightarrow \mathbb{R}$ and is defined as \cite{hofmann2008kernel,herbrich2001learning}:
\begin{align}
k(\b{x}_1, \b{x}_2) := \b{\phi}(\b{x}_1)^\top \b{\phi}(\b{x}_2),
\end{align}
which is a measure of ``similarity'' between the two vectors because the inner product captures similarity.

We can compute the kernel of two matrices $\b{X}_1 \in \mathbb{R}^{d \times n_1}$ and $\b{X}_2 \in \mathbb{R}^{d \times n_2}$ and have a ``kernel matrix'' (also called ``Gram matrix''):
\begin{align}
\mathbb{R}^{n_1 \times n_2} \ni \b{K}(\b{X}_1, \b{X}_2) := \b{\Phi}(\b{X}_1)^\top \b{\Phi}(\b{X}_2),
\end{align}
where $\b{\Phi}(\b{X}_1) := [\b{\phi}(\b{x}_1), \dots, \b{\phi}(\b{x}_n)] \in \mathbb{R}^{t \times n_1}$ is the matrix of mapped $\b{X}_1$ to the feature space. The $\b{\Phi}(\b{X}_2) \in \mathbb{R}^{t \times n_2}$ is defined similarly. 
We can compute the kernel matrix of dataset $\b{X} \in \mathbb{R}^{d \times n}$ over itself:
\begin{align}\label{equation_kernel_matrix_of_X}
\mathbb{R}^{n \times n} \ni \b{K}_x := \b{K}(\b{X}, \b{X}) = \b{\Phi}(\b{X})^\top \b{\Phi}(\b{X}),
\end{align}
where $\b{\Phi}(\b{X}) := [\b{\phi}(\b{x}_1), \dots, \b{\phi}(\b{x}_n)] \in \mathbb{R}^{t \times n}$ is the pulled (mapped) data.

Note that in kernel methods, the pulled data $\b{\Phi}(\b{X})$ are usually not available and merely the kernel matrix $\b{K}(\b{X}, \b{X})$, which is the inner product of the pulled data with itself, is available.

There exist different types of kernels. Some of the most well-known kernels are:
\begin{align}
&\text{Linear:} ~~ k(\b{x}_1, \b{x}_2) = \b{x}_1^\top \b{x}_2 + c_1, \\
&\text{Polynomial:} ~~ k(\b{x}_1, \b{x}_2) = (c_1\b{x}_1^\top \b{x}_2 + c_2)^{c_3}, \\
&\text{Gaussian:} ~~ k(\b{x}_1, \b{x}_2) = \exp\big(\!-\frac{||\b{x}_1 - \b{x}_2||_2^2}{2\sigma^2}\big), \\
&\text{Sigmoid:} ~~ k(\b{x}_1, \b{x}_2) = \tanh(c_1\b{x}_1^\top\b{x}_2 + c_2), 
\end{align}
where $c_1$, $c_2$, $c_3$, and $\sigma$ are scalar constants. The Gaussian and Sigmoid kernels are also called Radial Basis Function (RBF) and hyperbolic tangent, respectively. Note that the Gaussian kernel can also be written as $\exp\big(\!-\gamma||\b{x}_1 - \b{x}_2||_2^2\big)$ where $\gamma > 0$.

It is noteworthy to mention that in the RBF kernel, the dimensionality of the feature space is infinite. The reason lies in the Maclaurin series expansion (Taylor series expansion at zero) of this kernel:
\begin{align*}
\exp(-\gamma r) \approx 1 - \gamma r + \frac{\gamma^2}{2!} r^2 - \frac{\gamma^3}{3!} r^3 + \dots,
\end{align*}
where $r := ||\b{x}_1 - \b{x}_2||_2^2$, which is infinite dimensional with respect to $r$.

It is also worth mentioning that if we want the pulled data $\b{\Phi}(\b{X})$ to be centered, i.e.:
\begin{align}
\breve{\b{\Phi}}(\b{X}) := \b{\Phi}(\b{X})\b{H},
\end{align}
we should double center the kernel matrix (see Appendix \ref{section_appendix_centering}) because if we use centered pulled data in Eq. (\ref{equation_kernel_matrix_of_X}), we have:
\begin{align*}
&\breve{\b{\Phi}}(\b{X})^\top \breve{\b{\Phi}}(\b{X}) = \big(\b{\Phi}(\b{X})\b{H}\big)^\top \big(\b{\Phi}(\b{X})\b{H}\big) \\
&\overset{(\ref{equation_centeringMatrix_is_symmetric})}{=} \b{H} \b{\Phi}(\b{X})^\top \b{\Phi}(\b{X})\b{H} \overset{(\ref{equation_kernel_matrix_of_X})}{=} \b{H} \b{K}_x \b{H}, 
\end{align*}
which is the double-centered kernel matrix.
Thus:
\begin{align}\label{equation_doubleCentered_kernel}
\breve{\b{K}}_x := \b{H} \b{K}_x \b{H} = \breve{\b{\Phi}}(\b{X})^\top \breve{\b{\Phi}}(\b{X}),
\end{align}
where $\breve{\b{K}}_x$ denotes the double-centered kernel matrix (see Appendix \ref{section_appendix_centeringKernel}).

\subsection{Projection}

We apply incomplete SVD on the centered pulled (mapped) data $\breve{\b{\Phi}}(\b{X})$ (see Appendix \ref{section_appendix_SVD}):
\begin{align}\label{equation_kernelPCA_SVD_Phi}
\mathbb{R}^{t \times n} \ni \breve{\b{\Phi}}(\b{X}) = \b{U}\b{\Sigma}\b{V}^\top,
\end{align}
where $\b{U} \in \mathbb{R}^{t \times p}$ and $\b{V} \in \mathbb{R}^{n \times p}$ contain the $p$ leading left and right singular vectors of $\breve{\b{\Phi}}(\b{X})$, respectively, where $p$ is the number of ``non-zero'' singular values of $\breve{\b{\Phi}}(\b{X})$ and usually $p \ll t$. Here, the $\b{\Sigma} \in \mathbb{R}^{p \times p}$ is a square matrix having the $p$ largest non-zero singular values of $\breve{\b{\Phi}}(\b{X})$. 

However, as mentioned before, the pulled data are not necessarily available so Eq. (\ref{equation_kernelPCA_SVD_Phi}) cannot be done.
The kernel, however, is available. Therefore, we apply eigen-decomposition \cite{ghojogh2019eigenvalue} to the double-centered kernel:
\begin{align}\label{equation_kernelPCA_eigen_centered_kernel}
\breve{\b{K}}_x \b{V} = \b{V} \b{\Lambda},
\end{align}
where the columns of $\b{V}$ and the diagonal of $\b{\Lambda}$ are the eigenvectors and eigenvalues of $\breve{\b{K}}_x$, respectively.
The columns of $\b{V}$ in Eq. (\ref{equation_kernelPCA_SVD_Phi}) are the right singular vectors of $\breve{\b{\Phi}}(\b{X})$ which are equivalent to the eigenvectors of $\breve{\b{\Phi}}(\b{X})^\top \breve{\b{\Phi}}(\b{X}) = \breve{\b{K}}_x$,
according to Proposition \ref{proposition_SVD} in Appendix \ref{section_appendix_SVD}.
Also, according to that proposition, the diagonal of $\b{\Sigma}$ in Eq. (\ref{equation_kernelPCA_SVD_Phi}) is equivalent to the square root of eigenvalues of $\breve{\b{K}}_x$. 

Therefore, in practice where the pulling function is not necessarily available, we use Eq. (\ref{equation_kernelPCA_eigen_centered_kernel}) in order to find the $\b{V}$ and $\b{\Sigma}$ in Eq. (\ref{equation_kernelPCA_SVD_Phi}). The Eq. (\ref{equation_kernelPCA_eigen_centered_kernel}) can be restated as:
\begin{align}\label{equation_kernelPCA_eigen_centered_kernel_2}
\breve{\b{K}}_x \b{V} = \b{V} \b{\Sigma}^2,
\end{align}
to be compatible to Eq. (\ref{equation_kernelPCA_SVD_Phi}).
It is noteworthy that because of using Eq. (\ref{equation_kernelPCA_eigen_centered_kernel_2}) instead of Eq. (\ref{equation_kernelPCA_SVD_Phi}), \textit{the projection directions $\b{U}$ are not available in kernel PCA to be observed or plotted.}

Similar to what we did for Eq. (\ref{equation_projected_dual}):
\begin{align}
&\breve{\b{\Phi}}(\b{X}) = \b{U}\b{\Sigma}\b{V}^\top \nonumber \\
&\implies \b{U}^\top\breve{\b{\Phi}}(\b{X}) = \underbrace{\b{U}^\top\b{U}}_{\b{I}}\b{\Sigma}\b{V}^\top = \b{\Sigma}\b{V}^\top \nonumber \\
&\therefore ~~~~~~~ \b{\Phi}(\widetilde{\b{X}}) = \b{U}^\top\breve{\b{\Phi}}(\b{X}) = \b{\Sigma}\b{V}^\top, \label{equation_projected_kernel_PCA}
\end{align}
where $\b{\Sigma}$ and $\b{V}$ are obtained from Eq. (\ref{equation_kernelPCA_eigen_centered_kernel_2}).
The Eq. (\ref{equation_projected_kernel_PCA}) is projection of the training data in kernel PCA.

\subsection{Reconstruction}

Similar to what we did for Eq. (\ref{equation_reconstruction_dual}):
\begin{align}
\breve{\b{\Phi}}(\b{X}) = \b{U}\b{\Sigma}\b{V}^\top &\implies \breve{\b{\Phi}}(\b{X})\b{V} = \b{U}\b{\Sigma}\underbrace{\b{V}^\top\b{V}}_{\b{I}} = \b{U}\b{\Sigma} \nonumber \\
&\implies \b{U} = \breve{\b{\Phi}}(\b{X})\b{V}\b{\Sigma}^{-1}. \label{equation_kernel_U} 
\end{align}
Therefore, the reconstruction is:
\begin{alignat}{2}
&\b{\Phi}(\widehat{\b{X}}) &&= \b{U}\b{\Phi}(\widetilde{\b{X}}) + \b{\mu}_x \overset{(\ref{equation_kernel_U})}{=} \breve{\b{\Phi}}(\b{X})\b{V}\b{\Sigma}^{-1}\b{\Phi}(\widetilde{\b{X}}) + \b{\mu}_x \nonumber \\ 
& &&\overset{(\ref{equation_projected_kernel_PCA})}{=} \breve{\b{\Phi}}(\b{X})\b{V}\underbrace{\b{\Sigma}^{-1}\b{\Sigma}}_{\b{I}}\b{V}^\top + \b{\mu}_x \nonumber \\
&\implies &&\b{\Phi}(\widehat{\b{X}}) = \breve{\b{\Phi}}(\b{X})\b{V}\b{V}^\top + \b{\mu}_x. \label{equation_reconstruction_kernel_PCA}
\end{alignat}
However, the $\breve{\b{\Phi}}(\b{X})$ is not available necessarily; therefore, we cannot reconstruct the training data in kernel PCA.

\subsection{Out-of-sample Projection}

Similar to what we did for Eq. (\ref{equation_outOfSample_projection_dual}):
\begin{align}
&\b{U}^\top \overset{(\ref{equation_kernel_U})}{=} \b{\Sigma}^{-\top}\b{V}^\top\breve{\b{\Phi}}(\b{X})^\top \overset{(a)}{=} \b{\Sigma}^{-1}\b{V}^\top\breve{\b{\Phi}}(\b{X})^\top \nonumber \\
&\implies \b{\phi}(\widetilde{\b{x}}_t) = \b{U}^\top \breve{\b{\phi}}(\b{x}_t) = \b{\Sigma}^{-1}\b{V}^\top\breve{\b{\Phi}}(\b{X})^\top \breve{\b{\phi}}(\b{x}_t), \nonumber \\
&\overset{(\ref{equation_appendix_centered_kernelVector_outOfSample})}{\implies} \b{\phi}(\widetilde{\b{x}}_t) = \b{\Sigma}^{-1}\b{V}^\top\breve{\b{k}}_t, \label{equation_outOfSample_projection_kernel_PCA}
\end{align}
where $(a)$ is because $\b{\Sigma}^{-1}$ is diagonal and thus symmetric and the $\breve{\b{k}}_t \in \mathbb{R}^n$ is calculated by Eq. (\ref{equation_appendix_doubleCentered_outOfSample_kernel_oneSample}) in Appendix \ref{section_appendix_centeringKernel}.


The Eq. (\ref{equation_outOfSample_projection_kernel_PCA}) is the projection of out-of-sample data in kernel PCA.

Considering all the $n_t$ out-of-sample data points, $\b{X}_t$, the projection is:
\begin{align}
\b{\phi}(\widetilde{\b{X}}_t) = \b{\Sigma}^{-1}\b{V}^\top\breve{\b{K}}_t,
\end{align}
where $\breve{\b{K}}_t$ is calculated by Eq. (\ref{equation_appendix_doubleCentered_outOfSample_kernel}).

\subsection{Out-of-sample Reconstruction}

Similar to what we did for Eq. (\ref{equation_outOfSample_reconstruct_dual}):
\begin{align}
&\implies \b{U}\b{U}^\top \overset{(\ref{equation_kernel_U})}{=} \breve{\b{\Phi}}(\b{X})\b{V}\b{\Sigma}^{-1} \b{\Sigma}^{-1}\b{V}^\top\breve{\b{\Phi}}(\b{X})^\top \nonumber\\
&\implies \b{\phi}(\widehat{\b{x}}_t) = \breve{\b{\Phi}}(\b{X})\b{V}\b{\Sigma}^{-2}\b{V}^\top\breve{\b{\Phi}}(\b{X})^\top \breve{\b{\phi}}(\b{x}_t) + \b{\mu}_x \nonumber \\
&\overset{(\ref{equation_appendix_centered_kernelVector_outOfSample})}{\implies} \b{\phi}(\widehat{\b{x}}_t) = \breve{\b{\Phi}}(\b{X})\b{V}\b{\Sigma}^{-2}\b{V}^\top \breve{\b{k}}_t + \b{\mu}_x, \label{equation_outOfSample_reconstruct_kernel_PCA}
\end{align}
where the $\breve{\b{k}}_t \in \mathbb{R}^n$ is calculated by Eq. (\ref{equation_appendix_doubleCentered_outOfSample_kernel_oneSample}) in Appendix \ref{section_appendix_centeringKernel}.

Considering all the $n_t$ out-of-sample data points, $\b{X}_t$, the reconstruction is:
\begin{align}
\b{\Phi}(\widehat{\b{X}}_t) = \breve{\b{\Phi}}(\b{X})\b{V}\b{\Sigma}^{-2}\b{V}^\top \breve{\b{K}}_t + \b{\mu}_x,
\end{align}
where $\breve{\b{K}}_t$ is calculated by Eq. (\ref{equation_appendix_doubleCentered_outOfSample_kernel}).

In Eq. (\ref{equation_outOfSample_reconstruct_kernel_PCA}), the $\breve{\b{\Phi}}(\b{X})$ appeared at the left of expression, is not available necessarily; therefore, we cannot reconstruct an out-of-sample point in kernel PCA.
According to Eqs. (\ref{equation_reconstruction_kernel_PCA}) and (\ref{equation_outOfSample_reconstruct_kernel_PCA}), we conclude that kernel PCA is not able to reconstruct any data, whether training or out-of-sample.

\subsection{Why is Kernel PCA Useful?}

Finally, it is noteworthy that as the choice of the best kernel might be hard, the kernel PCA is not ``always'' effective in practice \cite{ghodsi2006dimensionality}. 
However, it provides us some useful theoretical insights for explaining the PCA, Multi-Dimensional Scaling (MDS) \cite{cox2008multidimensional}, Isomap \cite{tenenbaum2000global}, Locally Linear Embedding (LLE) \cite{roweis2000nonlinear}, and Laplacian Eigenmap (LE) \cite{belkin2003laplacian} as special cases of kernel PCA with their own kernels (see \cite{ham2004kernel} and chapter 2 in \cite{strange2014open}).

\section{Supervised Principal Component Analysis Using Scoring}

The older version of SPCA used scoring \cite{bair2006prediction}. 
In this version of SPCA, PCA is not a special case of SPCA. The version of SPCA, which will be introduced in the next section, is more solid in terms of theory where PCA is a special case of SPCA.

In SPCA using scoring, we compute the similarity of every feature of data with the class labels and then sort the features and remove the features having the least similarity with the labels. The larger the similarity of a feature with the labels, the better that feature is for discrimination in the embedded subspace. 

Consider the training dataset $\mathbb{R}^{d \times n} \ni \b{X} = [\b{x}_1, \dots, \b{x}_n] = [\b{x}^1, \dots, \b{x}^d]^\top$ where $\b{x}_i \in \mathbb{R}^d$ and $\b{x}^j \in \mathbb{R}^n$ are the $i$-th data point and the $j$-th feature, respectively. 
This type of SPCA is only for classification task so we can consider the dimensionality of the labels to be one, $\ell=1$. Thus, we have $\b{Y} \in \mathbb{R}^{1 \times n}$. We define $\mathbb{R}^n \ni \b{y} := \b{Y}^\top$.

The score of the $j$-th feature, $\b{x}^j$, is:
\begin{align}
\mathbb{R} \ni s_j := \frac{(\b{x}^j)^\top \b{y}}{||(\b{x}^j)^\top \b{x}^j||_2} = \frac{(\b{x}^j)^\top \b{y}}{\sqrt{(\b{x}^j)^\top \b{x}^j}},
\end{align}
After computing the scores of all the features, we sort the features from largest to smallest score. Let $\b{X}' \in \mathbb{R}^{d \times n}$ denote the training dataset whose features are sorted.
We take the $q \leq d$ features with largest scores and remove the other features. Let:
\begin{align}
\mathbb{R}^{q \times n} \ni \b{X}'' := \b{X}'(1:q, :),
\end{align}
be the training dataset with $q$ best features. 

Then, we apply PCA on the $\b{X}'' \in \mathbb{R}^{q \times n}$ rather than $\b{X} \in \mathbb{R}^{d \times n}$. Applying PCA and kernel PCA on $\b{X}''$ results in SPCA and kernel PCA, respectively. 
This type of SPCA was mostly used and popular in bioinformatics for genome data analysis \cite{ma2011principal}.

\section{Supervised Principal Component Analysis Using HSIC}

\subsection{Hilbert-Schmidt Independence Criterion}

Suppose we want to measure the dependence of two random variables. Measuring the correlation between them is easier because correlation is just ``linear'' dependence. 

According to \cite{hein2004kernels}, two random variables are independent if and only if any bounded continuous functions of them are uncorrelated. Therefore, if we map the two random variables $\b{x}$ and $\b{y}$ to two different (``separable'') Reproducing Kernel Hilbert Spaces (RKHSs) and have $\b{\phi}(\b{x})$ and $\b{\phi}(\b{y})$, we can measure the correlation of $\b{\phi}(\b{x})$ and $\b{\phi}(\b{y})$ in Hilbert space to have an estimation of dependence of $\b{x}$ and $\b{y}$ in the original space. 

The correlation of $\b{\phi}(\b{x})$ and $\b{\phi}(\b{y})$ can be computed by the Hilbert-Schmidt norm of the cross-covariance of them \cite{gretton2005measuring}. Note that the squared Hilbert-Schmidt norm of a matrix $\b{A}$ is \cite{bell2016trace}:
\begin{align*}
||\b{A}||_{HS}^2 := \textbf{tr}(\b{A}^\top \b{A}),
\end{align*}
and the cross-covariance matrix of two vectors $\b{x}$ and $\b{y}$ is \cite{gubner2006probability,gretton2005measuring}:
\begin{align*}
\mathbb{C}\text{ov}(\b{x}, \b{y}) := \mathbb{E}\Big[&\big(\b{x} - \mathbb{E}(\b{x})\big) \big(\b{y} - \mathbb{E}(\b{y})\big) \Big].
\end{align*}
Using the explained intuition, an empirical estimation of the Hilbert-Schmidt Independence Criterion (HSIC) is introduced \cite{gretton2005measuring}:
\begin{align}\label{equation_HSIC}
\text{HSIC} := \frac{1}{(n-1)^2}\, \textbf{tr}(\ddot{\b{K}}_x\b{H}\b{K}_y\b{H}),
\end{align}
where $\ddot{\b{K}}_x$ and $\b{K}_y$ are the kernels over $\b{x}$ and $\b{y}$, respectively. In other words, $\ddot{\b{K}}_x = \b{\phi}(\b{x})^\top \b{\phi}(\b{x})$ and $\b{K}_y = \b{\phi}(\b{y})^\top \b{\phi}(\b{y})$. 
We are using $\ddot{\b{K}}_x$ rather than $\b{K}_x$ because $\b{K}_x$ is going to be used in kernel SPCA in the next sections. 
The term $1/(n-1)^2$ is used for normalization.
The $\b{H}$ is the centering matrix (see Appendix \ref{section_appendix_centering}):
\begin{align}
\mathbb{R}^{n \times n} \ni \b{H} = \b{I} - (1/n) \b{1}\b{1}^\top.
\end{align}
The $\b{H}\b{K}_y\b{H}$ double centers the $\b{K}_y$ in HSIC.

The HSIC (Eq. (\ref{equation_HSIC})) measures the dependence of two random variable vectors $\b{x}$ and $\b{y}$. Note that $\text{HSIC}=0$ and $\text{HSIC}>0$ mean that $\b{x}$ and $\b{y}$ are independent and dependent, respectively. The greater the HSIC, the greater dependence they have.

\subsection{Supervised PCA}

Supervised PCA (SPCA) \cite{barshan2011supervised} uses the HSIC. We have the data $\b{X} = [\b{x}_1, \dots, \b{x}_n] \in \mathbb{R}^{d \times n}$ and the labels $\b{Y} = [\b{y}_1, \dots, \b{y}_n] \in \mathbb{R}^{\ell \times n}$, where $\ell$ is the dimensionality of the labels and we usually have $\ell=1$. However, in case the labels are encoded (e.g., one-hot-encoded) or SPCA is used for regression (e.g., see \cite{ghojogh2019instance}), we have $\ell > 1$.

SPCA tries to maximize the dependence of the projected data points $\b{U}^\top \b{X}$ and the labels $\b{Y}$. It uses a linear kernel for the projected data points: 
\begin{align}\label{equation_SPCA_kernel_of_projection}
\ddot{\b{K}}_x = (\b{U}^\top \b{X})^\top (\b{U}^\top \b{X}) = \b{X}^\top \b{U} \b{U}^\top \b{X},
\end{align}
and an arbitrary kernel $\b{K}_y$ over $\b{Y}$.
For classification task, one of the best choices for the $\b{K}_y$ is delta kernel \cite{barshan2011supervised} where the $(i,j)$-th element of kernel is:
\begin{align}
\b{K}_y = \delta_{\b{y}_i, \b{y}_j} := 
\left\{
\begin{array}{ll}
  1 & \text{if } \b{y}_i = \b{y}_j,\\
  0 & \text{if } \b{y}_i \neq \b{y}_j,
\end{array}
\right.
\end{align}
where $\delta_{\b{y}_i, \b{y}_j}$ is the Kronecker delta which is one if the $\b{x}_i$ and $\b{x}_j$ belong to the same class.

Another good choice for kernel in classification task in SPCA is an arbitrary kernel (e.g., linear kernel $\b{K}_y = \b{Y}^\top \b{Y}$) over $\b{Y}$ where the columns of $\b{Y}$ are one-hot encoded. This is a good choice because the distances of classes will be equal; otherwise, some classes will fall closer than the others for no reason and fairness between classes goes away.

The SPCA can also be used for regression (e.g., see \cite{ghojogh2019instance}) and that is one of the advantages of SPCA. In that case, a good choice for $\b{K}_y$ is an arbitrary kernel (e.g., linear kernel $\b{K}_y = \b{Y}^\top \b{Y}$) over $\b{Y}$ where the columns of the $\b{Y}$, i.e., labels, are the observations in regression. Here, the distances of observations have meaning and should not be manipulated.

The HSIC in SPCA case becomes: 
\begin{align}\label{equation_HSIC_SPCA}
\text{HSIC} = \frac{1}{(n-1)^2}\, \textbf{tr}(\b{X}^\top \b{U} \b{U}^\top \b{X}\b{H}\b{K}_y\b{H}).
\end{align}
where $\b{U} \in \mathbb{R}^{d \times p}$ is the unknown projection matrix for projection onto the SPCA subspace and should be found. The desired dimensionality of the subspace is $p$ and usually $p \ll d$. 

We should maximize the HSIC in order to maxzimize the dependence of $\b{U}^\top \b{X}$ and $\b{Y}$. Hence:
\begin{equation}
\begin{aligned}
& \underset{\b{U}}{\text{maximize}}
& & \textbf{tr}(\b{X}^\top \b{U} \b{U}^\top \b{X}\b{H}\b{K}_y\b{H}), \\
& \text{subject to}
& & \b{U}^\top \b{U} = \b{I},
\end{aligned}
\end{equation}
where the constraint ensures that the $\b{U}$ is an orthogonal matrix, i.e., the SPCA directions are orthonormal.

Using Lagrangian \cite{boyd2004convex}, we have:
\begin{align*}
\mathcal{L} &= \textbf{tr}(\b{X}^\top \b{U} \b{U}^\top \b{X}\b{H}\b{K}_y\b{H}) - \textbf{tr}\big(\b{\Lambda}^\top (\b{U}^\top \b{U} - \b{I})\big) \\
&\overset{(a)}{=} \textbf{tr}(\b{U} \b{U}^\top \b{X}\b{H}\b{K}_y\b{H}\b{X}^\top) - \textbf{tr}\big(\b{\Lambda}^\top (\b{U}^\top \b{U} - \b{I})\big),
\end{align*}
where $(a)$ is because of the cyclic property of trace and $\b{\Lambda} \in \mathbb{R}^{p \times p}$ is a diagonal matrix $\textbf{diag}([\lambda_1, \dots, \lambda_p]^\top)$ including the Lagrange multipliers.
Setting the derivative of Lagrangian to zero gives:
\begin{align}
& \mathbb{R}^{d \times p} \ni \frac{\partial \mathcal{L}}{\partial \b{U}} = 2 \b{X}\b{H}\b{K}_y\b{H}\b{X}^\top \b{U} - 2\b{U} \b{\Lambda} \overset{\text{set}}{=} 0 \nonumber \\ 
& \implies \b{X}\b{H}\b{K}_y\b{H}\b{X}^\top \b{U} = \b{U} \b{\Lambda}, \label{eqaution_SPCA_eigendecomposition}
\end{align}
which is the eigen-decomposition of $\b{X}\b{H}\b{K}_y\b{H}\b{X}^\top$ where the columns of $\b{U}$ and the diagonal of $\b{\Lambda}$ are the eigenvectors and eigenvalues of $\b{X}\b{H}\b{K}_y\b{H}\b{X}^\top$, respectively \cite{ghojogh2019eigenvalue}. The eigenvectors and eigenvalues are sorted from the leading (largest eigenvalue) to the trailing (smallest eigenvalue) because we are maximizing in the optimization problem.
As a conclusion, if projecting onto the SPCA subspace or $\textbf{span}\{\b{u}_1, \dots, \b{u}_p\}$, the SPCA directions $\{\b{u}_1, \dots, \b{u}_p\}$ are the sorted eigenvectors of $\b{X}\b{H}\b{K}_y\b{H}\b{X}^\top$.
In other words, the columns of the projection matrix $\b{U}$ in SPCA are the $p$ leading eigenvectors of $\b{X}\b{H}\b{K}_y\b{H}\b{X}^\top$.

Similar to what we had in PCA, the projection, projection of out-of-sample, reconstruction, and reconstruction of out-of-sample in SPCA are: 
\begin{align}
&\widetilde{\b{X}} = \b{U}^\top \b{X}, \label{equation_SPCA_projection} \\
&\widetilde{\b{x}}_t = \b{U}^\top \b{x}_t, \label{equation_SPCA_projection_outOfSample} \\
&\widehat{\b{X}} = \b{U} \b{U}^\top \b{X} = \b{U} \widetilde{\b{X}}, \label{equation_SPCA_reconstruction} \\
&\widehat{\b{x}}_t = \b{U} \b{U}^\top \b{x}_t = \b{U} \widetilde{\b{x}}_t, \label{equation_SPCA_reconstruction_outOfSample}
\end{align}
respectively.
In SPCA, there is no need to center the data as the centering is already handled by $\b{H}$ in HSIC. This gets more clear in the following section where we see that PCA is a special case of SPCA.
Note that in the equations of SPCA, although not necessary, we can center the data and in that case, the mean of embedding in the subspace will be zero.

Considering all the $n_t$ out-of-sample data points, the projection and reconstruction are:
\begin{align}
&\widetilde{\b{X}}_t = \b{U}^\top \b{X}_t, \\
&\widehat{\b{X}}_t = \b{U} \b{U}^\top \b{X}_t = \b{U} \widetilde{\b{X}}_t,
\end{align}
respectively.

\subsection{PCA is a special case of SPCA!}

Not considering the similarities of the labels means that we do not care about the class labels so we are unsupervised. 
if we do not consider the similarities of labels, the kernel over the labels becomes the identity matrix, $\b{K}_y = \b{I}$.
According to Eq. (\ref{eqaution_SPCA_eigendecomposition}), SPCA is the eigen-decomposition of $\b{X}\b{H}\b{K}_y\b{H}\b{X}^\top$. In this case, this matrix becomes:
\begin{align*}
\b{X}\b{H}\b{K}_y\b{H}\b{X}^\top &= \b{X}\b{H}\b{I}\b{H}\b{X}^\top \overset{(\ref{equation_centeringMatrix_is_symmetric})}{=} \b{X}\b{H}\b{I}\b{H}^\top\b{X}^\top \\
&= \b{X}\b{H}\b{H}^\top\b{X}^\top = (\b{X}\b{H})(\b{X}\b{H})^\top \\
&\overset{(\ref{equation_centered_training_data})}{=} \breve{\b{X}} \breve{\b{X}}^\top \overset{(\ref{equation_covariance_matrix})}{=} \b{S},
\end{align*}
which is the covariance matrix whose eigenvectors are the PCA directions.
Thus, if we do not consider the similarities of labels, i.e., we are unsupervised, SPCA reduces to PCA as expected.

\subsection{Dual Supervised PCA}

The SPCA can be formulated in dual form \cite{barshan2011supervised}. 
We saw that in SPCA, the columns of $\b{U}$ are the eigenvectors of $\b{X}\b{H}\b{K}_y\b{H}\b{X}^\top$. We apply SVD on $\b{K}_y$ (see Appendix \ref{section_appendix_SVD}):
\begin{align*}
\mathbb{R}^{n \times n} \ni \b{K}_y = \b{Q} \b{\Omega} \b{Q}^\top, 
\end{align*}
where $\b{Q} \in \mathbb{R}^{n \times n}$ includes left or right singular vectors and $\b{\Omega} \in \mathbb{R}^{n \times n}$ contains the singular values of $\b{K}_y$. Note that the left and right singular vectors are equal because $\b{K}_y$ is symmetric and thus $\b{K}_y \b{K}_y^\top$ and $\b{K}_y^\top \b{K}_y$ are equal. As $\b{\Omega}$ is a diagonal matrix with non-negative entries, we can decompose it to $\b{\Omega} = \b{\Omega}^{1/2} \b{\Omega}^{1/2} = \b{\Omega}^{1/2} (\b{\Omega}^{1/2})^\top$ where the diagonal entries of $\b{\Omega}^{1/2} \in \mathbb{R}^{n \times n}$ are square root of diagonal entries of $\b{\Omega}$. Therefore, we can decompose $\b{K}_y$ into:
\begin{align}
\b{K}_y &= \b{Q} \b{\Omega}^{1/2} (\b{\Omega}^{1/2})^\top \b{Q}^\top \nonumber \\
&= (\b{Q} \b{\Omega}^{1/2}) (\b{Q} \b{\Omega}^{1/2})^\top = \b{\Delta} \b{\Delta}^\top, \label{equation_SPCA_decompose_kernelOfLabels}
\end{align}
where:
\begin{align}
\mathbb{R}^{n \times n} \ni \b{\Delta} := \b{Q} \b{\Omega}^{1/2}.
\end{align}
Therefore, we have:
\begin{align*}
\therefore ~~~~ \b{X}\b{H}\b{K}_y\b{H}\b{X}^\top &\overset{(\ref{equation_SPCA_decompose_kernelOfLabels})}{=} \b{X}\b{H}\b{\Delta} \b{\Delta}^\top\b{H}\b{X}^\top \\
&\overset{(\ref{equation_centeringMatrix_is_symmetric})}{=} (\b{X}\b{H}\b{\Delta}) (\b{X}\b{H}\b{\Delta})^\top = \b{\Psi}\b{\Psi}^\top,
\end{align*}
where:
\begin{align}\label{equation_Psi_dual_SPCA}
\mathbb{R}^{d \times n} \ni \b{\Psi} := \b{X}\b{H}\b{\Delta}.
\end{align}

We apply incomplete SVD on $\b{\Psi}$ (see Appendix \ref{section_appendix_SVD}):
\begin{align}\label{equation_Psi_SVD_dual_SPCA}
\mathbb{R}^{d \times n} \ni \b{\Psi} = \b{U} \b{\Sigma} \b{V}^\top, 
\end{align}
where $\b{U} \in \mathbb{R}^{d \times p}$ and $\b{V} \in \mathbb{R}^{d \times p}$ include the $p$ leading left or right singular vectors of $\b{\Psi}$, respectively, and $\b{\Sigma} \in \mathbb{R}^{p \times p}$ contains the $p$ largest singular values of $\b{\Psi}$.

We can compute $\b{U}$ as:
\begin{align}
\b{\Psi} = \b{U} \b{\Sigma} \b{V}^\top &\implies \b{\Psi}\b{V} = \b{U} \b{\Sigma} \underbrace{\b{V}^\top \b{V}}_{\b{I}} = \b{U} \b{\Sigma} \nonumber\\
&\implies \b{U} = \b{\Psi}\b{V} \b{\Sigma}^{-1} \label{equation_U_dual_SPCA}
\end{align}

The projection of data $\b{X}$ in dual SPCA is:
\begin{align}
\widetilde{\b{X}} &\overset{(\ref{equation_SPCA_projection})}{=} \b{U}^\top \b{X} \overset{(\ref{equation_U_dual_SPCA})}{=} (\b{\Psi}\b{V} \b{\Sigma}^{-1})^\top \b{X} = \b{\Sigma}^{-\top}\b{V}^\top\b{\Psi}^\top\b{X} \nonumber \\
& \overset{(\ref{equation_Psi_dual_SPCA})}{=} \b{\Sigma}^{-1}\b{V}^\top\b{\Delta}^\top\b{H}\b{X}^\top\b{X}. \label{equation_projected_dual_SPCA}
\end{align}
Note that $\b{\Sigma}$ and $\b{H}$ are symmetric. 

Similarly, out-of-sample projection in dual SPCA is:
\begin{align}\label{equation_outOfSample_projected_dual_SPCA}
\widetilde{\b{x}}_t = \b{\Sigma}^{-1}\b{V}^\top\b{\Delta}^\top\b{H}\b{X}^\top\b{x}_t.
\end{align}

Considering all the $n_t$ out-of-sample data points, the projection is:
\begin{align}
\widetilde{\b{X}}_t = \b{\Sigma}^{-1}\b{V}^\top\b{\Delta}^\top\b{H}\b{X}^\top\b{X}_t.
\end{align}

Reconstruction of $\b{X}$ after projection onto the SPCA subspace is:
\begin{align}
\widehat{\b{X}} &\overset{(\ref{equation_SPCA_reconstruction})}{=} \b{U}\b{U}^\top \b{X} = \b{U}\widetilde{\b{X}} \nonumber \\
&\overset{(a)}{=} \b{\Psi}\b{V} \b{\Sigma}^{-1}\b{\Sigma}^{-1}\b{V}^\top\b{\Delta}^\top\b{H}\b{X}^\top\b{X} \nonumber \\
&= \b{\Psi}\b{V} \b{\Sigma}^{-2}\b{V}^\top\b{\Delta}^\top\b{H}\b{X}^\top\b{X} \nonumber \\
&\overset{(\ref{equation_Psi_dual_SPCA})}{=} \b{X}\b{H}\b{\Delta}\b{V} \b{\Sigma}^{-2}\b{V}^\top\b{\Delta}^\top\b{H}\b{X}^\top\b{X} \label{equation_reconstruction_dual_SPCA}
\end{align}
where $(a)$ is because of Eqs. (\ref{equation_U_dual_SPCA}) and (\ref{equation_projected_dual_SPCA}).

Similarly, reconstruction of an out-of-sample data point in dual SPCA is:
\begin{align}\label{equation_outOfSample_reconstruction_dual_SPCA}
\widehat{\b{x}}_t = \b{X}\b{H}\b{\Delta}\b{V} \b{\Sigma}^{-2}\b{V}^\top\b{\Delta}^\top\b{H}\b{X}^\top\b{x}_t.
\end{align}

Considering all the $n_t$ out-of-sample data points, the reconstruction is:
\begin{align}
\widehat{\b{X}}_t = \b{X}\b{H}\b{\Delta}\b{V} \b{\Sigma}^{-2}\b{V}^\top\b{\Delta}^\top\b{H}\b{X}^\top\b{X}_t.
\end{align}

Note that dual PCA was important especially because it provided opportunity to kernelize the PCA. However, as it is explained in the next section, kernel SPCA can be obtained directly from SPCA. Therefore, dual SPCA might not be very important for the sake of kernel SPCA.

The dual SPCA has another benefit similar to what we had for dual PCA. In Eqs. (\ref{equation_projected_dual_SPCA}), (\ref{equation_outOfSample_projected_dual_SPCA}), (\ref{equation_reconstruction_dual_SPCA}), and (\ref{equation_outOfSample_reconstruction_dual_SPCA}), $\b{U}$ is not used but $\b{V}$ exists. In Eq. (\ref{equation_Psi_SVD_dual_SPCA}), the columns of $\b{V}$ are the eigenvectors of $\b{\Psi}^\top \b{\Psi} \in \mathbb{R}^{n \times n}$, according to Proposition \ref{proposition_SVD} in appendix \ref{section_appendix_SVD}. On the other hand, in direct SPCA, we have eigen-decomposition of $\b{X}\b{H}\b{K}_y\b{H}\b{X}^\top \in \mathbb{R}^{d \times d}$ in Eq. (\ref{eqaution_SPCA_eigendecomposition}) which is then used in Eqs. (\ref{equation_SPCA_projection}), (\ref{equation_SPCA_projection_outOfSample}), (\ref{equation_SPCA_reconstruction}), and (\ref{equation_SPCA_reconstruction_outOfSample}).
In case we have huge dimensionality, $d \gg n$, decomposition of the $n \times n$ matrix is faster and needs less storage so dual SPCA will be more efficient.

\subsection{Kernel Supervised PCA}

The SPCA can be kernelized by two approaches, using either direct SPCA or dual SPCA \cite{barshan2011supervised}. 

\subsubsection{Kernel SPCA Using Direct SPCA}

According to the representation theory \cite{alperin1993local}, any solution (direction) $\b{u} \in \mathcal{H}$ must lie in the span of ``all'' the training vectors mapped to $\mathcal{H}$, i.e., $\b{\Phi}(\b{X}) = [\b{\phi}(\b{x}_1), \dots, \b{\phi}(\b{x}_n)] \in \mathbb{R}^{t\times n}$ (usually $t \gg d$). Note that $\mathcal{H}$ denotes the Hilbert space (feature space). Therefore, we can state that:
\begin{align*}
\b{u} = \sum_{i=1}^n \theta_i\, \b{\phi}(\b{x}_i) = \b{\Phi}(\b{X})\, \b{\theta},
\end{align*}
where $\b{\theta} \in \mathbb{R}^n$ is the unknown vector of coefficients, and $\b{u} \in \mathbb{R}^t$ is the kernel SPCA direction in Hilbert space here. 
The directions can be put together in $\mathbb{R}^{t \times p} \ni \b{U} := [\b{u}_1, \dots, \b{u}_p]$:
\begin{align}\label{equation_U_kernel_SPCA}
\b{U} = \b{\Phi}(\b{X})\, \b{\Theta},
\end{align}
where $\b{\Theta} := [\b{\theta}_1, \dots, \b{\theta}_p] \in \mathbb{R}^{n \times p}$.

The Eq. (\ref{equation_HSIC_SPCA}) in the feature space becomes:
\begin{align*}
\text{HSIC} = \frac{1}{(n-1)^2}\, \textbf{tr}(\b{\Phi}(\b{X})^\top \b{U} \b{U}^\top \b{\Phi}(\b{X})\b{H}\b{K}_y\b{H}).
\end{align*}
The $\textbf{tr}(\b{\Phi}(\b{X})^\top \b{U} \b{U}^\top \b{\Phi}(\b{X})\b{H}\b{K}_y\b{H})$ can be simplified as:
\begin{align}
&\textbf{tr}(\b{\Phi}(\b{X})^\top \b{U} \b{U}^\top \b{\Phi}(\b{X})\b{H}\b{K}_y\b{H}) \nonumber \\
&= \textbf{tr}(\b{U} \b{U}^\top \b{\Phi}(\b{X})\b{H}\b{K}_y\b{H}\b{\Phi}(\b{X})^\top) \nonumber \\
&= \textbf{tr}(\b{U}^\top \b{\Phi}(\b{X})\b{H}\b{K}_y\b{H}\b{\Phi}(\b{X})^\top\b{U}) \label{equation_traceInHSIC_kernel_SPCA}
\end{align}
Plugging Eq. (\ref{equation_U_kernel_SPCA}) in Eq. (\ref{equation_traceInHSIC_kernel_SPCA}) gives us:
\begin{align}
&\textbf{tr}\big(\b{\Theta}^\top \b{\Phi}(\b{X})^\top \b{\Phi}(\b{X})\b{H}\b{K}_y\b{H}\b{\Phi}(\b{X})^\top\b{\Phi}(\b{X})\, \b{\Theta}\big) \nonumber \\
&= \textbf{tr}(\b{\Theta}^\top \b{K}_x \b{H}\b{K}_y\b{H}\b{K}_x \b{\Theta}),
\end{align}
where:
\begin{align}\label{equation_kernel_SPCA_kernel_X}
\mathbb{R}^{n \times n} \ni \b{K}_x := \b{\Phi}(\b{X})^\top \b{\Phi}(\b{X}).
\end{align}
Note that the Eqs. (\ref{equation_kernel_SPCA_kernel_X}) and (\ref{equation_SPCA_kernel_of_projection}) are different and should not be confused.

Moreover, the constraint of orthogonality of projection matrix, i.e., $\b{U}^\top \b{U} = \b{I}$, in the feature space becomes:
\begin{align}
\b{U}^\top \b{U} &= \big(\b{\Phi}(\b{X})\, \b{\Theta}\big)^\top \big(\b{\Phi}(\b{X})\, \b{\Theta}\big) \nonumber \\
&= \b{\Theta}^\top \b{\Phi}(\b{X})^\top \b{\Phi}(\b{X}) \b{\Theta} = \b{\Theta}^\top \b{K}_x\, \b{\Theta}.
\end{align}
Therefore, the optimization problem is:
\begin{equation}
\begin{aligned}
& \underset{\b{\Theta}}{\text{maximize}}
& & \textbf{tr}(\b{\Theta}^\top \b{K}_x \b{H}\b{K}_y\b{H}\b{K}_x \b{\Theta}), \\
& \text{subject to}
& & \b{\Theta}^\top \b{K}_x\, \b{\Theta} = \b{I},
\end{aligned}
\end{equation}
where the objective variable is the unknown $\b{\Theta}$.

Using Lagrange multiplier \cite{boyd2004convex}, we have:
\begin{align*}
&\mathcal{L} = \\
&\textbf{tr}(\b{\Theta}^\top \b{K}_x \b{H}\b{K}_y\b{H}\b{K}_x \b{\Theta}) - \textbf{tr}\big(\b{\Lambda}^\top (\b{\Theta}^\top \b{K}_x\, \b{\Theta} - \b{I})\big) \\
&= \textbf{tr}(\b{\Theta} \b{\Theta}^\top \b{K}_x \b{H}\b{K}_y\b{H}\b{K}_x) - \textbf{tr}\big(\b{\Lambda}^\top (\b{\Theta}^\top \b{K}_x\, \b{\Theta} - \b{I})\big),
\end{align*}
where $\b{\Lambda} \in \mathbb{R}^{p \times p}$ is a diagonal matrix $\textbf{diag}([\lambda_1, \dots, \lambda_p]^\top)$.
\begin{align}
& \mathbb{R}^{n \times p} \ni \frac{\partial \mathcal{L}}{\partial \b{\Theta}} = 2 \b{K}_x \b{H}\b{K}_y\b{H}\b{K}_x \b{\Theta} - 2\b{K}_x\b{\Theta} \b{\Lambda} \overset{\text{set}}{=} 0 \nonumber \\ 
& \implies \b{K}_x \b{H}\b{K}_y\b{H}\b{K}_x \b{\Theta} = \b{K}_x\b{\Theta} \b{\Lambda}, \label{equation_kernel_SPCA_generalized_eigen_problem}
\end{align}
which is the generalized eigenvalue problem $(\b{K}_x \b{H}\b{K}_y\b{H}\b{K}_x, \b{K}_x)$ \cite{ghojogh2019eigenvalue}. The $\b{\Theta}$ and $\b{\Lambda}$, which are the eigenvector and eigenvalue matrices, respectively, can be calculated according to \cite{ghojogh2019eigenvalue}.

Note that in practice, we can naively solve Eq. (\ref{equation_kernel_SPCA_generalized_eigen_problem}) by left multiplying $\b{K}_x^{-1}$ (hoping that it is positive definite and thus not singular):
\begin{align}
&\underbrace{\b{K}_x^{-1}\b{K}_x}_{\b{I}} \b{H}\b{K}_y\b{H}\b{K}_x \b{\Theta} = \b{\Theta} \b{\Lambda} \nonumber \\
&\implies \b{H}\b{K}_y\b{H}\b{K}_x \b{\Theta} = \b{\Theta} \b{\Lambda},
\end{align}
which is the eigenvalue problem \cite{ghojogh2019eigenvalue} for $\b{H}\b{K}_y\b{H}\b{K}_x$, where columns of $\b{\Theta}$ are the eigenvectors of it and $\b{\Lambda}$ includes its eigenvalues on its diagonal. 

If we take the $p$ leading eigenvectors to have $\b{\Theta} \in \mathbb{R}^{n \times p}$, the projection of $\b{\Phi}(\b{X}) \in \mathbb{R}^{t \times n}$ is:
\begin{align}
\mathbb{R}^{p \times n} \ni \b{\Phi}(\widetilde{\b{X}}) &= \b{U}^\top \b{\Phi}(\b{X}) \nonumber \\
&\overset{(\ref{equation_U_kernel_SPCA})}{=} \b{\Theta}^\top \b{\Phi}(\b{X})^\top \b{\Phi}(\b{X}) = \b{\Theta}^\top \b{K}_x, 
\end{align}
where $\mathbb{R}^{n \times n} \ni \b{K}_x := \b{\Phi}(\b{X})^\top \b{\Phi}(\b{X})$.
Similarly, the projection of out-of-sample data point $\b{\phi}(\b{x}_t) \in \mathbb{R}^t$ is:
\begin{align}
\mathbb{R}^{p} \ni \b{\phi}(\widetilde{\b{x}}_t) &= \b{U}^\top \b{\phi}(\b{x}_t) \nonumber \\
&\overset{(\ref{equation_U_kernel_SPCA})}{=} \b{\Theta}^\top \b{\Phi}(\b{X})^\top \b{\phi}(\b{x}_t) = \b{\Theta}^\top \b{k}_t, 
\end{align}
where $\b{k}_t$ is Eq. (\ref{equation_appendix_kernelVector_outOfSample}).

Considering all the $n_t$ out-of-sample data points, $\b{X}_t$, the projection is:
\begin{align}
\mathbb{R}^{p \times n_t} \ni \b{\phi}(\widetilde{\b{X}}_t) = \b{\Theta}^\top \b{K}_t,
\end{align}
where $\b{K}_t$ is Eq. (\ref{equation_appendix_kernelMatrix_outOfSample}).

As we will show in the following section, in kernel SPCA, as in kernel PCA, we cannot reconstruct data, whether training or out-of-sample. 

\subsubsection{Kernel SPCA Using Dual SPCA}

The Eq. (\ref{equation_Psi_dual_SPCA}) in $t$-dimensional feature space becomes:
\begin{align}\label{equation_Psi_kernel_SPCA_using_dual}
\mathbb{R}^{t \times n} \ni \b{\Psi} = \b{\Phi}(\b{X})\b{H}\b{\Delta},
\end{align}
where $\b{\Phi}(\b{X}) = [\b{\phi}(\b{x}_1), \dots, \b{\phi}(\b{x}_n)] \in \mathbb{R}^{t \times n}$.

Applying SVD (see Appendix \ref{section_appendix_SVD}) on $\b{\Psi}$ of Eq. (\ref{equation_Psi_kernel_SPCA_using_dual}) is similar to the form of Eq. (\ref{equation_Psi_SVD_dual_SPCA}).
Having the same discussion which we had for Eqs. (\ref{equation_kernelPCA_SVD_Phi}) and (\ref{equation_kernelPCA_eigen_centered_kernel_2}), we do not necessarily have $\b{\Phi}(\b{X})$ in Eq. (\ref{equation_Psi_kernel_SPCA_using_dual}) so we can obtain $\b{V}$ and $\b{\Sigma}$ as:
\begin{align}\label{equation_kernel_SPCA_eigen_Delta_centered_kernel}
\big(\b{\Delta}^\top \breve{\b{K}}_x \b{\Delta}\big) \b{V} = \b{V} \b{\Sigma}^2,
\end{align}
where $\breve{\b{K}}_x := \b{H} \b{K}_x \b{H} \b{\Delta}$ and the columns of $\b{V}$ are the eigenvectors of (see Proposition \ref{proposition_SVD} in Appendix \ref{section_appendix_SVD}):
\begin{align*}
\b{\Psi}^\top \b{\Psi} &\overset{(a)}{=} \b{\Delta}^\top \b{H} \b{\Phi}(\b{X})^\top \b{\Phi}(\b{X}) \b{H} \b{\Delta} \overset{(\ref{equation_kernel_SPCA_kernel_X})}{=} \b{\Delta}^\top \b{H} \b{K}_x \b{H} \b{\Delta} \\
&= \b{\Delta}^\top \breve{\b{K}}_x \b{\Delta},
\end{align*}
where $(a)$ is because of Eqs. (\ref{equation_Psi_kernel_SPCA_using_dual}) and (\ref{equation_centeringMatrix_is_symmetric}).

It is noteworthy that because of using Eq. (\ref{equation_kernel_SPCA_eigen_Delta_centered_kernel}) instead of Eq. (\ref{equation_Psi_kernel_SPCA_using_dual}), \textit{the projection directions $\b{U}$ are not available in kernel SPCA to be observed or plotted.}

Similar to equations (\ref{equation_Psi_SVD_dual_SPCA}) and (\ref{equation_U_dual_SPCA}), we have:
\begin{align}
\b{\Psi} = \b{U} \b{\Sigma} \b{V}^\top &\implies \b{\Psi}\b{V} = \b{U} \b{\Sigma} \underbrace{\b{V}^\top \b{V}}_{\b{I}} = \b{U} \b{\Sigma} \nonumber\\
&\implies \b{U} = \b{\Psi}\b{V} \b{\Sigma}^{-1}, \label{equation_U_kernel_SPCA_using_dual}
\end{align}
where $\b{V}$ and $\b{\Sigma}$ are obtained from Eq. (\ref{equation_kernel_SPCA_eigen_Delta_centered_kernel}).

The projection of data $\b{\Phi}(\b{X})$ is:
\begin{align}
\b{\Phi}(\widetilde{\b{X}}) &= \b{U}^\top \b{\Phi}(\b{X}) = (\b{\Psi}\b{V} \b{\Sigma}^{-1})^\top \b{\Phi}(\b{X}) \nonumber \\
&= \b{\Sigma}^{-\top}\b{V}^\top\b{\Psi}^\top\b{\Phi}(\b{X}) \nonumber \nonumber \\
& \overset{(\ref{equation_Psi_kernel_SPCA_using_dual})}{=} \b{\Sigma}^{-1}\b{V}^\top\b{\Delta}^\top\b{H}\b{\Phi}(\b{X})^\top\b{\Phi}(\b{X}), \nonumber \\
& \overset{(\ref{equation_kernel_SPCA_kernel_X})}{=} \b{\Sigma}^{-1}\b{V}^\top\b{\Delta}^\top\b{H}\b{K}_x.
\label{equation_projected_kernel_SPCA_using_dual}
\end{align}
Note that $\b{\Sigma}$ and $\b{H}$ are symmetric. 

Similarly, out-of-sample projection in kernel SPCA is:
\begin{align}
\b{\phi}(\widetilde{\b{x}}_t) &= \b{\Sigma}^{-1}\b{V}^\top\b{\Delta}^\top\b{H}\b{\Phi}(\b{X})^\top\b{\phi}(\b{x}_t) \nonumber \\
&= \b{\Sigma}^{-1}\b{V}^\top\b{\Delta}^\top\b{H}\,\b{k}_t,
\end{align}
where $\b{k}_t$ is Eq. (\ref{equation_appendix_kernelVector_outOfSample}).

Considering all the $n_t$ out-of-sample data points, $\b{X}_t$, the projection is:
\begin{align}
\b{\phi}(\widetilde{\b{X}}_t)
= \b{\Sigma}^{-1}\b{V}^\top\b{\Delta}^\top\b{H}\,\b{K}_t.
\end{align}
where $\b{K}_t$ is Eq. (\ref{equation_appendix_kernelMatrix_outOfSample}).

Reconstruction of $\b{\Phi}(\b{X})$ after projection onto the SPCA subspace is:
\begin{align}
\b{\Phi}(\widehat{\b{X}}) &= \b{U}\b{U}^\top \b{\Phi}(\b{X}) = \b{U}\b{\Phi}(\widetilde{\b{X}}) \nonumber \\
& \overset{(a)}{=} \b{\Psi}\b{V} \b{\Sigma}^{-1}\b{\Sigma}^{-1}\b{V}^\top\b{\Delta}^\top\b{H}\b{K}_x \nonumber \\
&= \b{\Psi}\b{V} \b{\Sigma}^{-2}\b{V}^\top\b{\Delta}^\top\b{H}\b{K}_x \nonumber \\
&\overset{(\ref{equation_Psi_kernel_SPCA_using_dual})}{=} \b{\Phi}(\b{X})\b{H}\b{\Delta}\b{V} \b{\Sigma}^{-2}\b{V}^\top\b{\Delta}^\top\b{H}\b{K}_x \label{equation_outOfSample_projected_kernel_SPCA_using_dual}
\end{align}
where $(a)$ is because of Eqs. (\ref{equation_U_kernel_SPCA_using_dual}) and (\ref{equation_projected_kernel_SPCA_using_dual}).

Similarly, reconstruction of an out-of-sample data point in dual SPCA is:
\begin{align}
\widehat{\b{x}}_t &= \b{\Phi}(\b{X})\b{H}\b{\Delta}\b{V} \b{\Sigma}^{-2}\b{V}^\top\b{\Delta}^\top\b{H}\b{\Phi}(\b{X})^\top\b{\phi}(\b{x}_t) \nonumber\\
&= \b{\Phi}(\b{X})\b{H}\b{\Delta}\b{V} \b{\Sigma}^{-2}\b{V}^\top\b{\Delta}^\top\b{H}\,\b{k}_t, \label{equation_outOfSample_reconstruction_kernel_SPCA_using_dual}
\end{align}
where $\b{k}_t$ is Eq. (\ref{equation_appendix_kernelVector_outOfSample}).

However, in Eqs. (\ref{equation_outOfSample_projected_kernel_SPCA_using_dual}) and (\ref{equation_outOfSample_reconstruction_kernel_SPCA_using_dual}), we do not necessarily have $\b{\Phi}(\b{X})$; therefore, in kernel SPCA, as in kernel PCA, we cannot reconstruct data, whether training or out-of-sample.

\begin{figure*}[!t]
\centering
\includegraphics[width=6.5in]{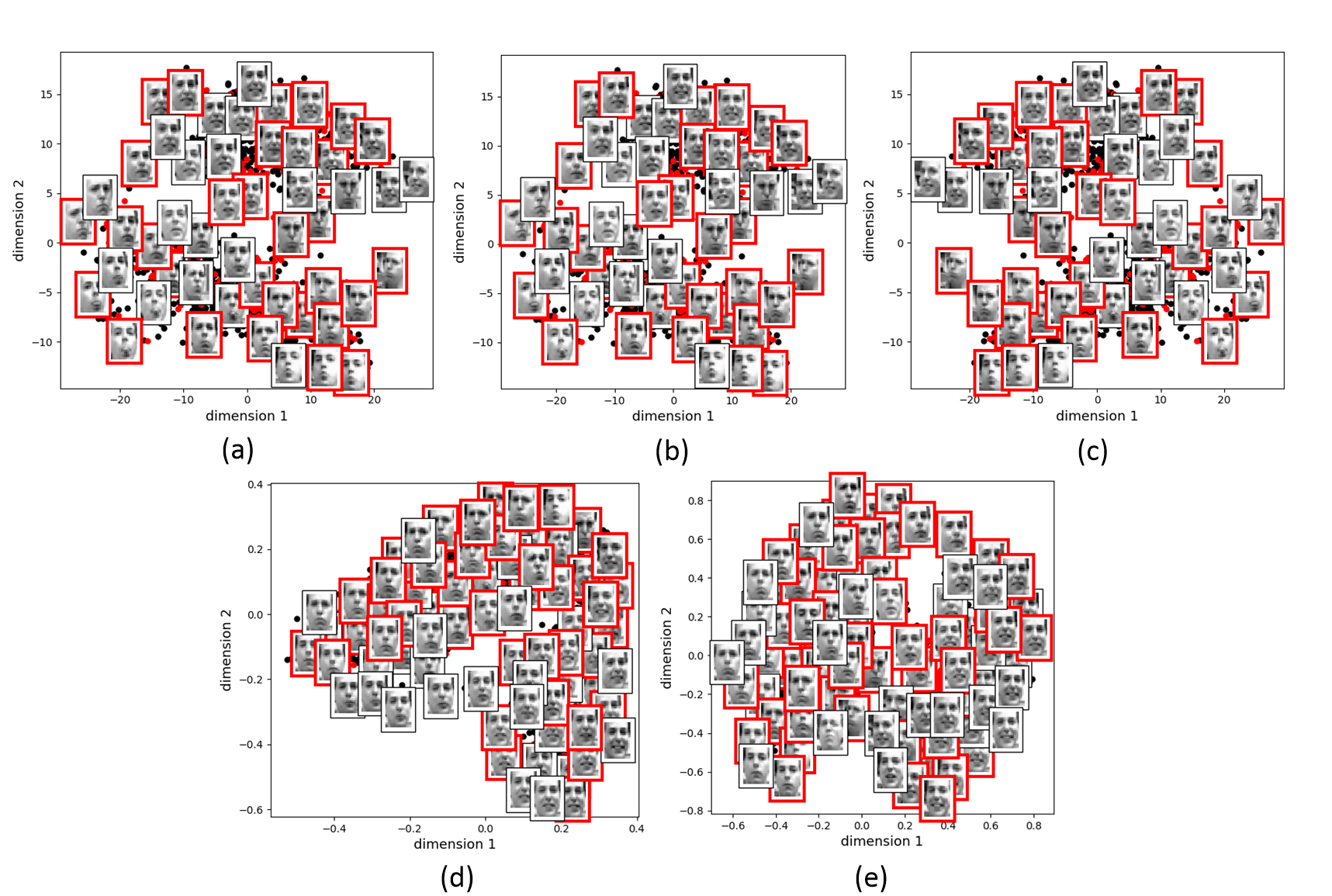}
\caption{Projection of the training and out-of-sample sets of Frey dataset onto the subspaces of (a) the PCA, (b) the dual SPCA, (c) the kernel PCA (linear kernel), (d) the kernel PCA (RBF kernel), and (e) the kernel PCA (cosine kernel). The images with red frame are the out-of-sample images.}
\label{figure_projection_test_Frey}
\end{figure*}

\begin{figure*}[!t]
\centering
\includegraphics[width=6in]{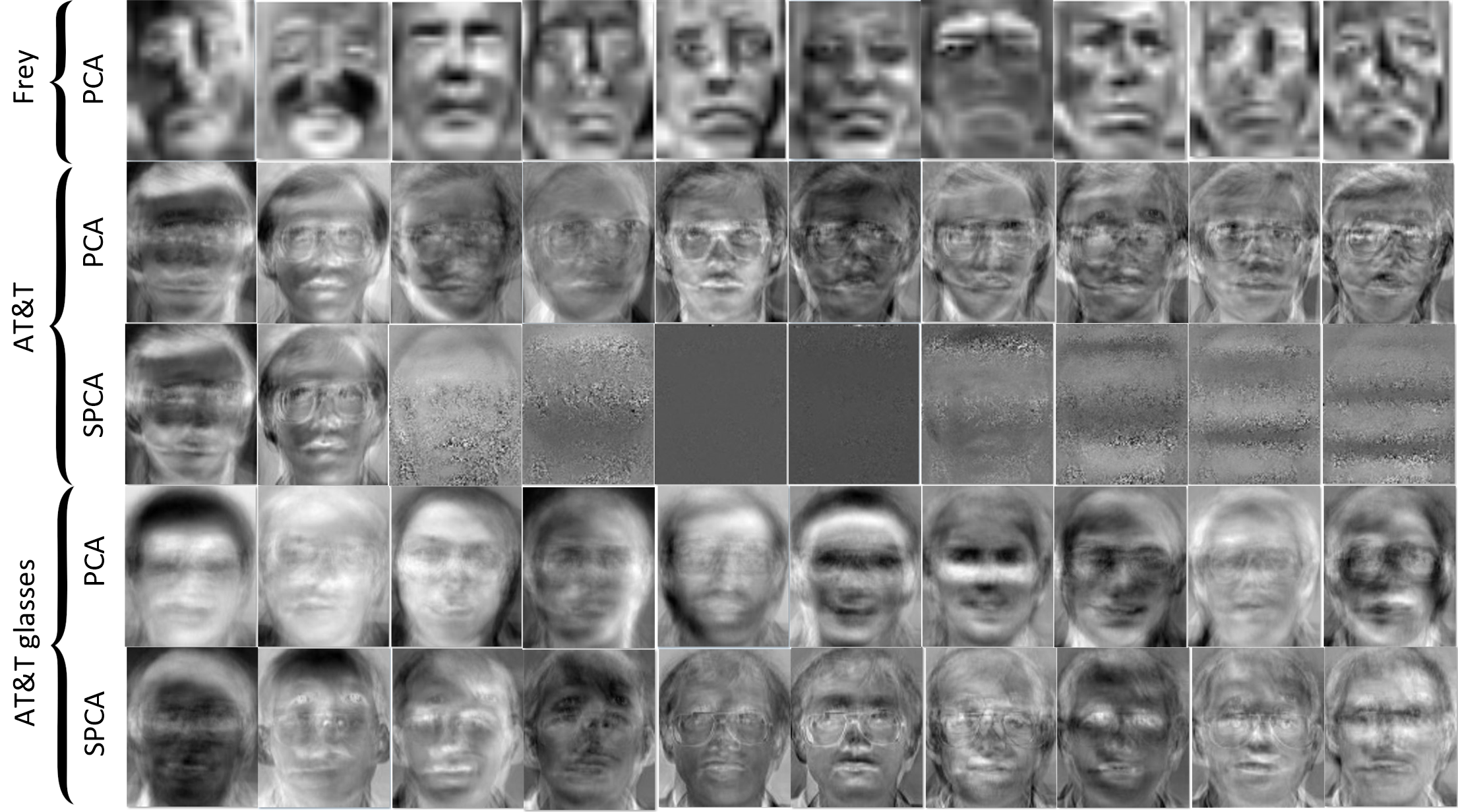}
\caption{The ghost faces: the leading eigenvectors of PCA and SPCA for Frey, AT\&T, and AT\&T glasses datasets.}
\label{figure_projection_directions}
\end{figure*}

\begin{figure*}[!t]
\centering
\includegraphics[width=4in]{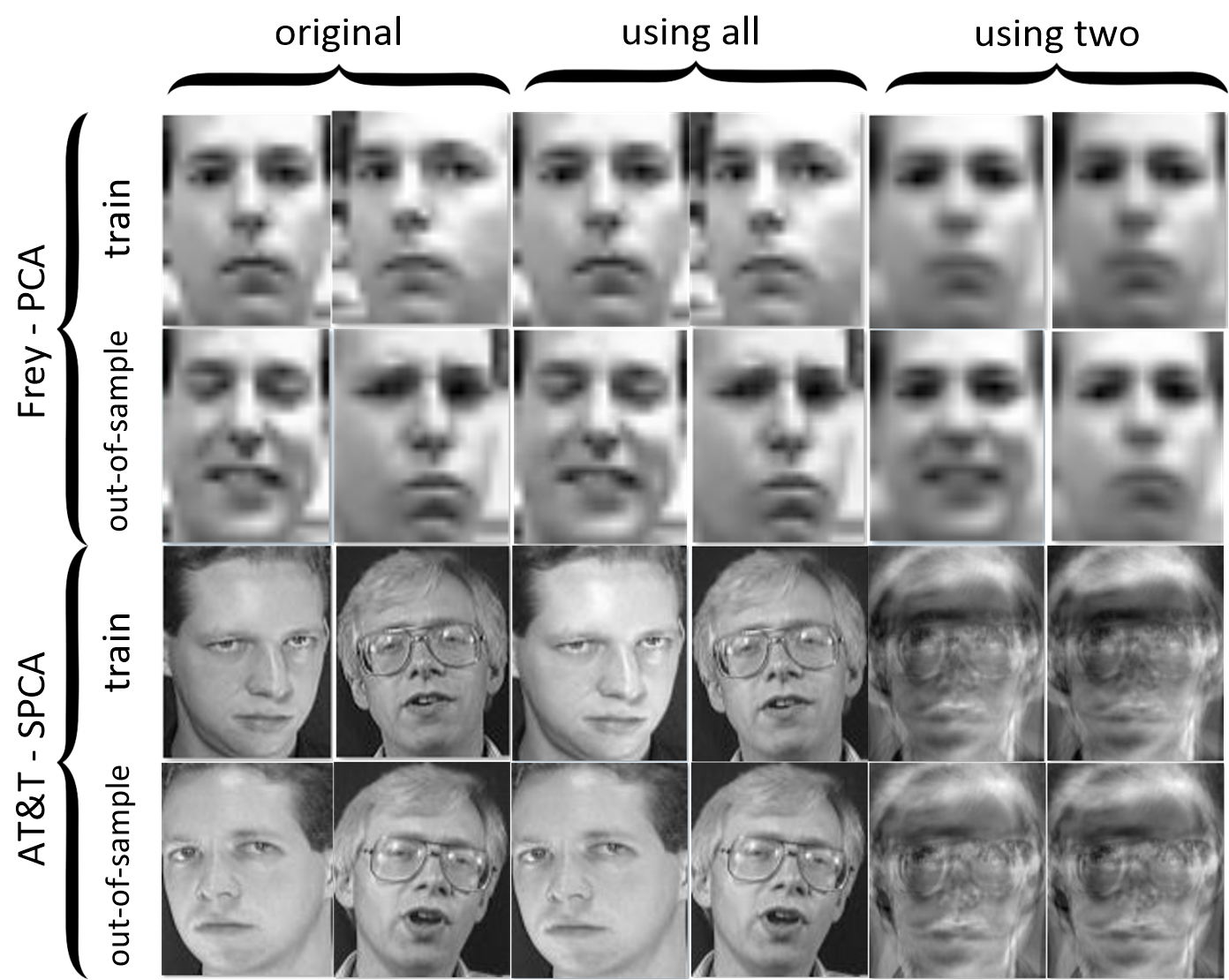}
\caption{The reconstructed faces using all and two of the leading eigenvectors of PCA and SPCA for Frey and AT\&T datasets.}
\label{figure_reconstruction_images}
\end{figure*}

\section{Eigenfaces}

This section introduces one of the most fundamental applications of PCA and its variants -- facial recognition.

\subsection{Projection Directions of Facial Images}

PCA and kernel PCA can be trained using images of diverse faces, to learn the most important facial features, which account for the variation between faces. Here, two facial datasets, i.e. the Frey dataset and the AT\&T (ORL) face dataset, are used to illustrate this concept. The AT\&T dataset has been used twice, i.e., (1) with human subjects as its classes and (2) with having and not having eye glasses as its classes.
Figure \ref{figure_projection_directions} demonstrates the top ten PCA directions for the PCA trained on these datasets.
As demonstrated, the projection directions of a facial dataset are some facial features which are like ghost faces in terms of appearance. That is why the facial projection directions are also referred to as ``ghost faces''.
The ghost faces in PCA are also referred to as ``eigenfaces'' \cite{turk1991eigenfaces,turk1991face} because PCA uses eigenvalue decomposition of the covariance matrix.

In Fig. \ref{figure_projection_directions}, the projection directions have captured different facial features that discriminate the data with respect to the maximum variance. The captured features are eyes, nose, cheeks, chin, lips, and eyebrows, which are the most important facial features.
This figure does not include projection directions of the kernel PCA because in kernel PCA the projection directions are not available.
Note that the facial recognition using the kernel PCA is referred to as ``kernel eigenfaces'' \cite{yang2000face}.
The ghost faces (facial projection directions) of SPCA can be referred to as the ``supervised eigenfaces''. Facial recognition using the kernel SPCA can also be referred to as ``kernel supervised eigenfaces''.
Figure \ref{figure_projection_directions} does not include projection directions of the kernel SPCA because the projection directions are not available in kernel SPCA.

Comparison of PCA and SPCA directions demonstrates that both PCA and SPCA are capturing eye glasses as important discriminators. However, some Haar wavelet\footnote{Haar wavelet is a family of square-shaped filters which form wavelet bases.} like features \cite{stankovic2003haar} are captured as the projection directions in SPCA. Haar wavelets are important in face recognition and detection; for example, they have been used in the Viola-Jones face detector \cite{wang2014analysis}.
As demonstrated in Fig. \ref{figure_projection_directions}, both PCA and SPCA have captured eyes as discriminators; however, SPCA has also focused on the frame of eye glasses because of the usage of class labels. Where PCA has also captured other distracting facial features, such as forehead, cheeks, hair, mustache, etc, because it is not aware that the two classes are different, in terms of glasses, and sees the dataset as a whole.

\subsection{Projection of Facial Images}

Using the obtained projection directions, the facial images can be projected onto the PCA subspace. Similarly, projected images using kernel PCA can also be obtained.
Figure \ref{figure_projection_test_Frey} demonstrates the projection of both training and out-of-sample facial images, of the Frey dataset onto the PCA, dual PCA, and kernel PCA subspaces. The used kernels were linear, RBF, and cosine.
As can be seen, the out-of-sample data, although were not seen in the training phase, are projected very well. The model, somewhat, has extrapolated the projections so it has learned generalizable subspaces.

\subsection{Reconstruction of Facial Images}

The facial images can be reconstructed after projection onto PCA and SPCA subspaces.
The reconstruction of training and test images, in Frey and AT\&T datasets, are depicted in Fig. \ref{figure_reconstruction_images}.
Reconstructions have occurred using all and also two top projection directions. 
As expected, the reconstructions using all projection directions are very similar to the original images.
However, reconstruction using two leading projection directions is not prefect. Although, most important facial features are reconstructed because the leading projection directions carry most of the information.

\section{Conclusion and Future Work}\label{section_conclusions}

In this paper, the PCA and SPCA were introduced in details of theory. Moreover, kernel PCA and kernel SPCA were covered. The illustrations and experiments on Frey and AT\&T face datasets were also provided in order to analyze the explained methods in practice. 

The calculation of $\b{K}_y \in \mathbb{R}^{n \times n}$ in SPCA might be challenging for big data in terms of speed and storage. The Supervised Random Projection (SRP) \cite{karimi2018srp,karimi2018exploring} addresses this problem by approximating the kernel matrix $\b{K}_y$ using Random Fourier Features (RFF)
\cite{rahimi2008random}. As a future work, we will write a tutorial on SRP.

Moreover, the sparsity is very effective because of the \textit{``bet on sparsity''} principal: ``Use a procedure that does well in sparse problems, since no procedure does well in dense problems \cite{friedman2001elements,tibshirani2015statistical}.''
Another reason for the effectiveness of the sparsity is Occam's razor \cite{domingos1999role} stating that ``simpler solutions are more likely to be correct than complex ones'' or ``simplicity is a goal in itself''.
Therefore, the sparse methods such as sparse PCA \cite{zou2006sparse,shen2008sparse}, sparse kernel PCA \cite{tipping2001sparse}, and Sparse Supervised Principal Component Analysis (SSPCA) \cite{sharifzadeh2017sparse} have been proposed. We will defer these methods to future tutorials.

\section*{Acknowledgment}
The authors hugely thank Prof. Ali Ghodsi (see his great online courses \cite{web_data_visualization,web_classification}), Prof. Mu Zhu, Prof. Wayne Oldford, Prof. Hoda Mohammadzade, and other professors whose courses have partly covered the materials mentioned in this tutorial paper.  


\appendix

\section{Centering Matrix}\label{section_appendix_centering}

Consider a matrix $\b{A} \in \mathbb{R}^{\alpha \times \beta}$.
We can show this matrix by its rows, $\b{A} = [\b{a}_1, \dots, \b{a}_\alpha]^\top$ or by its columns, $\b{A} = [\b{b}_1, \dots, \b{b}_\beta]$, where $\b{a}_i$ and $\b{b}_j$ denotes the $i$-th row and $j$-th column of $\b{A}$, respectively.
Note that the vectors are column vectors.

The ``left centering matrix'' is defined as:
\begin{align}\label{equation_left_centering_matrix}
\mathbb{R}^{\alpha \times \alpha} \ni \b{H} := \b{I} - (1/\alpha) \b{1}\b{1}^\top,
\end{align}
where $\b{1} = [1, \dots, 1]^\top \in \mathbb{R}^{\alpha}$ and $\b{I} \in \mathbb{R}^{\alpha \times \alpha}$ is the identity matrix.
Left multiplying this matrix to $\b{A}$, i.e., $\b{H}\b{A}$, removes the mean of rows of $\b{A}$ from all of its rows:
\begin{align}\label{equation_left_centered_matrix}
\b{H}\b{A} \overset{(\ref{equation_left_centering_matrix})}{=} \b{A} - (1/\alpha) \b{1}\b{1}^\top \b{A} = (\b{A}^\top - \b{\mu}_{\text{rows}})^\top,
\end{align}
where the column vector $\b{\mu}_{\text{rows}} \in \mathbb{R}^\beta$ is the mean of rows of $\b{A}$.

The ``right centering matrix'' is defined as:
\begin{align}\label{equation_right_centering_matrix}
\mathbb{R}^{\beta \times \beta} \ni \b{H} := \b{I} - (1/\beta) \b{1}\b{1}^\top,
\end{align}
where $\b{1} = [1, \dots, 1]^\top \in \mathbb{R}^{\beta}$ and $\b{I} \in \mathbb{R}^{\beta \times \beta}$ is the identity matrix.
Right multiplying this matrix to $\b{A}$, i.e., $\b{A}\b{H}$, removes the mean of columns of $\b{A}$ from all of its columns:
\begin{align}\label{equation_right_centered_matrix}
\b{A}\b{H} \overset{(\ref{equation_right_centering_matrix})}{=} \b{A} - (1/\beta) \b{A} \b{1}\b{1}^\top = \b{A} - \b{\mu}_{\text{cols}}
\end{align}
where the column vector $\b{\mu}_{\text{cols}} \in \mathbb{R}^\alpha$ is the mean of columns of $\b{A}$.

We can use both left and right centering matrices at the same time:
\begin{align}
\b{H}\b{A}\b{H} &= (\b{I}_\alpha - (1/\alpha) \b{1}_\alpha\b{1}_\alpha^\top) \b{A} (\b{I}_\beta - (1/\beta) \b{1}_\beta\b{1}_\beta^\top) \nonumber \\
&= (\b{A} - (1/\alpha) \b{1}_\alpha\b{1}_\alpha^\top \b{A}) (\b{I}_\beta - (1/\beta) \b{1}_\beta\b{1}_\beta^\top) \nonumber \\
&= \b{A} - (1/\alpha) \b{1}_\alpha\b{1}_\alpha^\top \b{A} - (1/\beta) \b{A} \b{1}_\beta\b{1}_\beta^\top \nonumber \\
&~~~~~ + (1 / (\alpha\beta)) \b{1}_\alpha\b{1}_\alpha^\top \b{A} \b{1}_\beta\b{1}_\beta^\top. \label{equation_appendix_doubleCentered_A}
\end{align}
This operation is commonly done for a kernel (see appendix A in \cite{scholkopf1998nonlinear} and Appendix \ref{section_appendix_centeringKernel} in this tutorial paper).
The second term removes the mean of rows of $\b{A}$ according to Eq. (\ref{equation_left_centered_matrix}) and the third term removes the mean of columns of $\b{A}$ according to Eq. (\ref{equation_right_centered_matrix}). The last term, however, adds the overall mean of $\b{A}$ back to it where the matrix $\b{\mu}_{\text{all}} \in \mathbb{R}^{\alpha \times \beta}$ whose all elements are the overall mean of $\b{A}$ is:
\begin{align}
&\b{\mu}_{\text{all}} := (1 / (\alpha\beta)) \b{1}_\alpha\b{1}_\alpha^\top \b{A} \b{1}_\beta\b{1}_\beta^\top \\
&\b{\mu}_{\text{all}}(.,.) = \frac{1}{\alpha \beta} \sum_{i=1}^\alpha \sum_{j=1}^\beta \b{A}(i,j),
\end{align}
where $\b{A}(i,j)$ is the $(i,j)$-th element of $\b{A}$ and $\b{\mu}_{\text{all}}(.,.)$ is every element of $\b{A}$.

Therefore, ``double centering'' for $\b{A}$ is defined as:
\begin{align}
\b{H}\b{A}\b{H} = (\b{A}^\top - \b{\mu}_{\text{rows}})^\top - \b{\mu}_{\text{cols}} + \b{\mu}_{\text{all}},
\end{align}
which removes both the row and column means of $\b{A}$ but adds back the overall mean.
Note that if the matrix $\b{A}$ is a square matrix, the left and right centering matrices are equal with the same dimensionality as the matrix $\b{A}$.

In computer programming, usage of centering matrix might have some precision errors. therefore, in computer programming, we have:
\begin{align*}
&\b{H}\b{A} \approx (\b{A}^\top - \b{\mu}_{\text{rows}})^\top, \\
&\b{A}\b{H} \approx \b{A} - \b{\mu}_{\text{cols}}, \\
&\b{H}\b{A}\b{H} \approx (\b{A}^\top - \b{\mu}_{\text{rows}})^\top - \b{\mu}_{\text{cols}} + \b{\mu}_{\text{all}},
\end{align*}
with a good enough approximation.

Moreover, the centering matrix is symmetric because:
\begin{align}
\b{H}^\top &= (\b{I} - (1/\alpha) \b{1}\b{1}^\top)^\top = \b{I} ^\top- (1/\alpha) (\b{1}\b{1}^\top)^\top \nonumber \\
&= \b{I} - (1/\alpha) \b{1}\b{1}^\top \overset{(\ref{equation_left_centering_matrix})}{=} \b{H}. \label{equation_centeringMatrix_is_symmetric}
\end{align}

The centering matrix is also idempotent:
\begin{align}\label{equation_centeringMatrix_is_idempotent}
\b{H}^k = \underbrace{\b{H} \b{H} \cdots \b{H}}_{k \text{ times}} = \b{H},
\end{align}
where $k$ is a positive integer. The proof is:
\begin{align*}
\b{H}\b{H} &= (\b{I} - (1/\alpha) \b{1}\b{1}^\top) (\b{I} - (1/\alpha) \b{1}\b{1}^\top) \\
&= \b{I} - (1/\alpha) \b{1}\b{1}^\top - (1/\alpha) \b{1}\b{1}^\top + (1/\alpha^2) \b{1}\underbrace{\b{1}^\top\b{1}}_{\alpha}\b{1}^\top \\
&= \b{I} - (1/\alpha) \b{1}\b{1}^\top - (1/\alpha) \b{1}\b{1}^\top + (1/\alpha) \b{1}\b{1}^\top \\
&= \b{I} - (1/\alpha) \b{1}\b{1}^\top \overset{(\ref{equation_left_centering_matrix})}{=} \b{H}.
\end{align*}
Hence:
\begin{align*}
\b{H}^k = (\underbrace{\b{H} \cdots (\underbrace{\b{H} (\underbrace{\b{H} \b{H}}_{\b{H}}}_{\b{H}}}_{\b{H}}))))))) = \b{H}. ~~~~ \text{Q.E.D.}
\end{align*}

For illustration, we provide a simple example:
\begin{align*}
\b{A} =
\begin{bmatrix}
    1 & 2 & 3 \\
    4 & 3 & 1 \\
\end{bmatrix},
\end{align*}
whose row mean, column mean, and overall mean matrix are:
\begin{align*}
&\b{\mu}_{\text{rows}} = [2.5, 2.5, 2]^\top, \\
&\b{\mu}_{\text{cols}} = [2, 2.66]^\top, \\
&\b{\mu}_{\text{all}} = 
\begin{bmatrix}
    2.33 & 2.33 & 2.33 \\
    2.33 & 2.33 & 2.33 \\
\end{bmatrix},
\end{align*}
respectively. 
The left, right, and double centering of $\b{A}$ are:
\begin{align*}
&\b{H}\b{A} =  
\begin{bmatrix}
    -1.5 & -0.5 & 1 \\
    1.5 & 0.5 & -1 \\
\end{bmatrix}, \\
&\b{A}\b{H} =  
\begin{bmatrix}
    -1 & 0 & 1 \\
    1.34 & 0.34 & -1.66 \\
\end{bmatrix}, \\
&\b{H}\b{A}\b{H} =  \b{H}\b{A} - \b{\mu}_{\text{cols}} + \b{\mu}_{\text{all}} \\
&~~~~~~~~~~~~ = 
\begin{bmatrix}
    -1.17 & -0.17 & 1.33 \\
    1.17 & 0.17 & -1.33 \\
\end{bmatrix},
\end{align*}
respectively.

\section{Singular Value Decomposition}\label{section_appendix_SVD}

Consider a matrix $\b{A} \in \mathbb{R}^{\alpha \times \beta}$.
Singular Value Decomposition (SVD) \cite{stewart1993early} is one of the most well-known and effective matrix decomposition methods. It has two different forms, i.e., complete and incomplete.
There are different methods for obtaining this decomposition, one of which is Jordan's algorithm \cite{stewart1993early}. Here, we do not explain how to obtain SVD but we introduce different forms of SVD and their properties.

The ``complete SVD'' decomposes the matrix as:
\begin{align}
&\mathbb{R}^{\alpha \times \beta} \ni \b{A} = \b{U}\b{\Sigma}\b{V}^\top, \\
&\b{U} \in \mathbb{R}^{\alpha \times \alpha}, ~~ \b{V} \in \mathbb{R}^{\beta \times \beta}, ~~ \b{\Sigma} \in \mathbb{R}^{\alpha \times \beta}, \nonumber 
\end{align}
where the columns of $\b{U}$ and the columns of $\b{V}$ are called ``left singular vectors'' and ``right singular vectors'', respectively. 
In complete SVD, the $\b{\Sigma}$ is a \textit{rectangular} diagonal matrix whose main diagonal includes the ``singular values''. In cases $\alpha > \beta$ and $\alpha < \beta$, this matrix is in the forms:
\begin{align*}
\b{\Sigma} = 
\begin{bmatrix}
    \sigma_{1} & 0 & 0 \\
    \vdots & \ddots & \vdots \\
    0 & 0 & \sigma_{\beta} \\
    0 & 0 & 0 \\
    \vdots & \vdots & \vdots \\
    0 & 0 & 0
\end{bmatrix}
\text{and}
\begin{bmatrix}
    \sigma_{1} & 0 & 0 & 0 & \cdots & 0 \\
    \vdots & \ddots & \vdots & 0 & \cdots & 0 \\
    0 & 0 & \sigma_{\alpha} & 0 & \cdots & 0 \\
\end{bmatrix},
\end{align*}
respectively.
In other words, the number of singular values is $\min(\alpha, \beta)$.

The ``incomplete SVD'' decomposes the matrix as:
\begin{align}
&\mathbb{R}^{\alpha \times \beta} \ni \b{A} = \b{U}\b{\Sigma}\b{V}^\top, \\
&\b{U} \in \mathbb{R}^{\alpha \times k}, ~~ \b{V} \in \mathbb{R}^{\beta \times k}, ~~ \b{\Sigma} \in \mathbb{R}^{k \times k}, \nonumber 
\end{align}
where \cite{golub1970singular}:
\begin{align}
k := \min(\alpha, \beta),
\end{align}
and the columns of $\b{U}$ and the columns of $\b{V}$ are called ``left singular vectors'' and ``right singular vectors'', respectively. 
In incomplete SVD, the $\b{\Sigma}$ is a \textit{square} diagonal matrix whose main diagonal includes the ``singular values''. The matrix $\b{\Sigma}$ is in the form:
\begin{align*}
\b{\Sigma} = 
\begin{bmatrix}
    \sigma_{1} & 0 & 0 \\
    \vdots & \ddots & \vdots \\
    0 & 0 & \sigma_{k} 
\end{bmatrix}.
\end{align*}

Note that in both complete and incomplete SVD, the left singular vectors are orthonormal and the right singular vectors are also orthonormal; therefore, $\b{U}$ and $\b{V}$ are both orthogonal matrices so:
\begin{align}
\b{U}^\top \b{U} = \b{I}, \\
\b{V}^\top \b{V} = \b{I}.
\end{align}
If these orthogonal matrices are not truncated and thus are square matrices, e.g., for complete SVD, we also have:
\begin{align}
\b{U} \b{U}^\top = \b{I}, \\
\b{V} \b{V}^\top = \b{I}.
\end{align}

\begin{proposition}\label{proposition_SVD}
In both complete and incomplete SVD of matrix $\b{A}$, the left and right singular vectors are the eigenvectors of $\b{A}\b{A}^\top$ and $\b{A}^\top \b{A}$, respectively, and the singular values are the square root of eigenvalues of either $\b{A}\b{A}^\top$ or $\b{A}^\top \b{A}$.
\end{proposition}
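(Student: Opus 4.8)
The plan is to substitute the SVD factorization $\b{A} = \b{U}\b{\Sigma}\b{V}^\top$ directly into the two Gram-type products $\b{A}\b{A}^\top$ and $\b{A}^\top\b{A}$, and then exploit the orthonormality of the singular vectors to collapse one of the factors to the identity. The only ingredients I need are the relations $\b{U}^\top\b{U} = \b{I}$ and $\b{V}^\top\b{V} = \b{I}$, which the excerpt establishes for both the complete and incomplete forms.

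First I would compute the left product: $\b{A}\b{A}^\top = \b{U}\b{\Sigma}\b{V}^\top(\b{U}\b{\Sigma}\b{V}^\top)^\top = \b{U}\b{\Sigma}\b{V}^\top\b{V}\b{\Sigma}^\top\b{U}^\top$, where the inner factor $\b{V}^\top\b{V}$ collapses to $\b{I}$, leaving $\b{A}\b{A}^\top = \b{U}(\b{\Sigma}\b{\Sigma}^\top)\b{U}^\top$. Right-multiplying by $\b{U}$ and using $\b{U}^\top\b{U} = \b{I}$ then gives $\b{A}\b{A}^\top\b{U} = \b{U}(\b{\Sigma}\b{\Sigma}^\top)$, which is precisely an eigenvalue equation of the form $\b{A}\b{A}^\top\b{U} = \b{U}\b{\Lambda}$ with $\b{\Lambda} = \b{\Sigma}\b{\Sigma}^\top$; hence the columns of $\b{U}$ (the left singular vectors) are eigenvectors of $\b{A}\b{A}^\top$. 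The symmetric computation for the right product yields $\b{A}^\top\b{A} = \b{V}(\b{\Sigma}^\top\b{\Sigma})\b{V}^\top$ and therefore $\b{A}^\top\b{A}\b{V} = \b{V}(\b{\Sigma}^\top\b{\Sigma})$, identifying the columns of $\b{V}$ (the right singular vectors) as eigenvectors of $\b{A}^\top\b{A}$.

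It then remains to read off the eigenvalues. The key observation is that $\b{\Sigma}\b{\Sigma}^\top$ and $\b{\Sigma}^\top\b{\Sigma}$ are both diagonal, with diagonal entries $\sigma_i^2$, because $\b{\Sigma}$ carries its only nonzero entries on the main diagonal. Thus the eigenvalues collected in $\b{\Lambda}$ are the squared singular values, and equivalently each singular value $\sigma_i$ is the nonnegative square root of the corresponding eigenvalue of $\b{A}\b{A}^\top$ or $\b{A}^\top\b{A}$, which proves the claim for both matrices.

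The step requiring the most care is the dimension bookkeeping that distinguishes the complete and incomplete forms, rather than any genuine obstacle. In the complete case $\b{\Sigma}$ is rectangular, so $\b{\Sigma}\b{\Sigma}^\top \in \mathbb{R}^{\alpha\times\alpha}$ and $\b{\Sigma}^\top\b{\Sigma}\in\mathbb{R}^{\beta\times\beta}$ have different sizes; each is diagonal with the $\sigma_i^2$ padded by extra zeros, so the two products share the same nonzero eigenvalues while the surplus zeros account for the differing numbers of eigenvectors. In the incomplete case $\b{\Sigma}$ is square, so $\b{\Sigma}\b{\Sigma}^\top = \b{\Sigma}^\top\b{\Sigma}$ and the identification is immediate. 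The one place to be explicit is confirming that the orthonormality relations $\b{U}^\top\b{U}=\b{I}$ and $\b{V}^\top\b{V}=\b{I}$ still hold in the truncated setting, since it is exactly this that lets the collapse to the identity go through.
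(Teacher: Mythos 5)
Your proof takes essentially the same route as the paper's: substitute $\b{A} = \b{U}\b{\Sigma}\b{V}^\top$ into $\b{A}\b{A}^\top$ and $\b{A}^\top\b{A}$, collapse the inner factor to $\b{I}$ by orthonormality, and read off the resulting factorizations as eigen-decompositions whose eigenvalues are the squared singular values. If anything, you are more careful than the paper, which writes $\b{\Sigma}\b{\Sigma} = \b{\Sigma}^2$ even in the complete case where $\b{\Sigma}$ is rectangular; your explicit distinction between $\b{\Sigma}\b{\Sigma}^\top$ and $\b{\Sigma}^\top\b{\Sigma}$, together with the remark about zero padding, handles that case correctly.
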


\begin{proof}
We have:
\begin{align*}
\b{A} \b{A}^\top &= (\b{U}\b{\Sigma}\b{V}^\top) (\b{U}\b{\Sigma}\b{V}^\top)^\top = \b{U}\b{\Sigma}\underbrace{\b{V}^\top \b{V}}_{\b{I}} \b{\Sigma}\b{U}^\top \\
&= \b{U}\b{\Sigma} \b{\Sigma}\b{U}^\top = \b{U}\b{\Sigma}^2\b{U}^\top,
\end{align*}
which is eigen-decomposition \cite{ghojogh2019eigenvalue} of $\b{A} \b{A}^\top$ where the columns of $\b{U}$ are the eigenvectors and the diagonal of $\b{\Sigma}^2$ are the eigenvalues so the diagonal of $\b{\Sigma}$ are the square root of eigenvalues.
We also have:
\begin{align*}
\b{A}^\top \b{A} &= (\b{U}\b{\Sigma}\b{V}^\top)^\top (\b{U}\b{\Sigma}\b{V}^\top) = \b{V}\b{\Sigma}\underbrace{\b{U}^\top \b{U}}_{\b{I}}\b{\Sigma}\b{V}^\top \\
&= \b{V}\b{\Sigma}\b{\Sigma}\b{V}^\top = \b{V}\b{\Sigma}^2\b{V}^\top,
\end{align*}
which is the eigen-decomposition \cite{ghojogh2019eigenvalue} of $\b{A}^\top \b{A}$ where the columns of $\b{V}$ are the eigenvectors and the diagonal of $\b{\Sigma}^2$ are the eigenvalues so the diagonal of $\b{\Sigma}$ are the square root of eigenvalues. Q.E.D.
\end{proof}

\section{Centring the Kernel Matrix for Training and Out-of-sample Data}\label{section_appendix_centeringKernel}

This appendix is based on \cite{scholkopf1997kernel} and Appendix A in \cite{scholkopf1998nonlinear}.

The kernel matrix for the training data, $\{\b{x}_i\}_{i=1}^n$ or $\b{X} \in \mathbb{R}^{d \times n}$, is:
\begin{align}
\mathbb{R}^{n \times n} \ni \b{K} := \b{\Phi}(\b{X})^\top \b{\Phi}(\b{X}),
\end{align}
whose $(i,j)$-th element is:
\begin{align}\label{equation_appendix_kernel_elements}
\mathbb{R} \ni \b{K}(i,j) = \b{\phi}(\b{x}_i)^\top \b{\phi}(\b{x}_j).
\end{align}
We want to center the pulled training data in the feature space:
\begin{align}\label{equation_appendix_centered_pulled_training}
\breve{\b{\phi}}(\b{x}_i) := \b{\phi}(\b{x}_i) - \frac{1}{n} \sum_{k=1}^n \b{\phi}(\b{x}_k).
\end{align}
If we center the pulled training data, the $(i,j)$-th element of kernel matrix becomes:
\begin{align*}
&\breve{\b{K}}(i,j) := \breve{\b{\phi}}(\b{x}_i)^\top \breve{\b{\phi}}(\b{x}_j) \\
&\overset{(\ref{equation_appendix_centered_pulled_training})}{=} \big(\b{\phi}(\b{x}_i) - \frac{1}{n} \sum_{k_1=1}^n \b{\phi}(\b{x}_{k_1})\big)^\top \\
&~~~~~~~~~~~~~~~~~~~~~~~~ \big(\b{\phi}(\b{x}_j) - \frac{1}{n} \sum_{k_2=1}^n \b{\phi}(\b{x}_{k_2})\big) \\
&= \b{\phi}(\b{x}_i)^\top \b{\phi}(\b{x}_j) - \frac{1}{n} \sum_{k_1=1}^n \b{\phi}(\b{x}_{k_1})^\top \b{\phi}(\b{x}_j) \\
&~~~~ - \frac{1}{n} \sum_{k_2=1}^n \b{\phi}(\b{x}_i)^\top \b{\phi}(\b{x}_{k_2}) \\
&~~~~ + \frac{1}{n^2} \sum_{k_1=1}^n  \sum_{k_2=1}^n \b{\phi}(\b{x}_{k_1})^\top \b{\phi}(\b{x}_{k_2}),
\end{align*}
Therefore, the double-centered training kernel matrix is:
\begin{align}
\mathbb{R}^{n \times n} \ni \breve{\b{K}} &= \b{K} - \frac{1}{n} \b{1}_{n \times n} \b{K} - \frac{1}{n} \b{K} \b{1}_{n \times n} \nonumber \\
&~~~~ + \frac{1}{n^2} \b{1}_{n \times n} \b{K} \b{1}_{n \times n} \overset{(\ref{equation_appendix_doubleCentered_A})}{=} \b{H} \b{K} \b{H}, \label{equation_appendix_doubleCentered_training_kernel}
\end{align}
where $\mathbb{R}^{n \times n} \ni \b{1}_{n \times n} := \b{1}_n \b{1}_n^\top$ and $\mathbb{R}^{n} \ni \b{1}_n := [1, \dots, 1]^\top$.

The Eq. (\ref{equation_appendix_doubleCentered_training_kernel}) is the kernel matrix when the pulled training data in the feature space are centered.
In Eq. (\ref{equation_appendix_doubleCentered_training_kernel}), the dimensionality of both centering matrices are $\b{H} \in \mathbb{R}^{n \times n}$.

The kernel matrix for the trainign data and the out-of-sample data, $\{\b{x}_{t,i}\}_{i=1}^{n_t}$ or $\b{X}_t \in \mathbb{R}^{d \times n_t}$, is:
\begin{align}\label{equation_appendix_kernelMatrix_outOfSample}
\mathbb{R}^{n \times n_t} \ni \b{K}_t := \b{\Phi}(\b{X})^\top \b{\Phi}(\b{X}_t),
\end{align}
whose $(i,j)$-th element is:
\begin{align}\label{equation_appendix_kernel_test_elements}
\mathbb{R} \ni \b{K}_t(i,j) = \b{\phi}(\b{x}_{i})^\top \b{\phi}(\b{x}_{t,j}).
\end{align}
We want to center the pulled training data in the feature space, i.e., Eq. (\ref{equation_appendix_centered_pulled_training}). Moreover, the out-of-sample data should be centered using the mean of training (and not out-of-sample) data:
\begin{align}\label{equation_appendix_centered_pulled_outOfSample}
\breve{\b{\phi}}(\b{x}_{t,i}) := \b{\phi}(\b{x}_{t,i}) - \frac{1}{n} \sum_{k=1}^n \b{\phi}(\b{x}_k).
\end{align}
If we center the pulled training and out-of-sample data, the $(i,j)$-th element of kernel matrix becomes:
\begin{align*}
&\breve{\b{K}}_t(i,j) := \breve{\b{\phi}}(\b{x}_i)^\top \breve{\b{\phi}}(\b{x}_{t,j}) \\
&\overset{(a)}{=} \big(\b{\phi}(\b{x}_i) - \frac{1}{n} \sum_{k_1=1}^n \b{\phi}(\b{x}_{k_1})\big)^\top \\
&~~~~~~~~~~~~~~~~~~~~~~~~ \big(\b{\phi}(\b{x}_{t,j}) - \frac{1}{n} \sum_{k_2=1}^n \b{\phi}(\b{x}_{k_2})\big) \\
&= \b{\phi}(\b{x}_i)^\top \b{\phi}(\b{x}_{t,j}) - \frac{1}{n} \sum_{k_1=1}^n \b{\phi}(\b{x}_{k_1})^\top \b{\phi}(\b{x}_{t,j}) \\
&~~~~ - \frac{1}{n} \sum_{k_2=1}^n \b{\phi}(\b{x}_i)^\top \b{\phi}(\b{x}_{k_2}) \\
&~~~~ + \frac{1}{n^2} \sum_{k_1=1}^n  \sum_{k_2=1}^n \b{\phi}(\b{x}_{k_1})^\top \b{\phi}(\b{x}_{k_2}),
\end{align*}
where (a) is because of Eqs. (\ref{equation_appendix_centered_pulled_training}) and (\ref{equation_appendix_centered_pulled_outOfSample}).
Therefore, the double-centered kernel matrix over training and out-of-sample data is:
\begin{align}
\mathbb{R}^{n \times n_t} \ni \breve{\b{K}}_t &= \b{K}_t - \frac{1}{n} \b{1}_{n \times n} \b{K}_t - \frac{1}{n} \b{K} \b{1}_{n \times n_t} \nonumber \\
&~~~~ + \frac{1}{n^2} \b{1}_{n \times n} \b{K} \b{1}_{n \times n_t}, \label{equation_appendix_doubleCentered_outOfSample_kernel}
\end{align}
where $\mathbb{R}^{n \times n} \ni \b{1}_{n \times n} := \b{1}_n \b{1}_n^\top$, $\mathbb{R}^{n \times n_t} \ni \b{1}_{n \times n_t} := \b{1}_n \b{1}_{n_t}^\top$, $\mathbb{R}^{n} \ni \b{1}_n := [1, \dots, 1]^\top$, and $\mathbb{R}^{n_t} \ni \b{1}_{n_t} := [1, \dots, 1]^\top$.

The Eq. (\ref{equation_appendix_doubleCentered_outOfSample_kernel}) is the kernel matrix when the pulled training data in the feature space are centered and the pulled out-of-sample data are centered using the mean of pulled training data.

If we have one out-of-sample $\b{x}_t$, the Eq. (\ref{equation_appendix_doubleCentered_outOfSample_kernel}) becomes:
\begin{align}
\mathbb{R}^{n} \ni \breve{\b{k}}_t &= \b{k}_t - \frac{1}{n} \b{1}_{n \times n} \b{k}_t - \frac{1}{n} \b{K} \b{1}_{n} + \frac{1}{n^2} \b{1}_{n \times n} \b{K} \b{1}_{n}, \label{equation_appendix_doubleCentered_outOfSample_kernel_oneSample}
\end{align}
where:
\begin{align}
&\mathbb{R}^n \ni \b{k}_t = \b{k}_t(\b{X}, \b{x}_t) := \b{\Phi}(\b{X})^\top \b{\phi}(\b{x}_t) \label{equation_appendix_kernelVector_outOfSample} \\
&~~~~~~~~~~~~~~ =[\b{\phi}(\b{x}_1)^\top \b{\phi}(\b{x}_t), \dots, \b{\phi}(\b{x}_n)^\top \b{\phi}(\b{x}_t)]^\top, \nonumber \\
&\mathbb{R}^n \ni \breve{\b{k}}_t = \breve{\b{k}}_t(\b{X}, \b{x}_t) := \breve{\b{\Phi}}(\b{X})^\top \breve{\b{\phi}}(\b{x}_t), \label{equation_appendix_centered_kernelVector_outOfSample} \\
&~~~~~~~~~~~~~~ =[\breve{\b{\phi}}(\b{x}_1)^\top \breve{\b{\phi}}(\b{x}_t), \dots, \breve{\b{\phi}}(\b{x}_n)^\top \breve{\b{\phi}}(\b{x}_t)]^\top, \nonumber 
\end{align}
where $\breve{\b{\Phi}}(\b{X})$ and $\breve{\b{\phi}}(\b{x}_t)$ are according to Eqs. (\ref{equation_appendix_centered_pulled_training}) and (\ref{equation_appendix_centered_pulled_outOfSample}), respectively.

\bibliography{References}

\begin{thebibliography}{53}
\providecommand{\natexlab}[1]{#1}
\providecommand{\url}[1]{\texttt{#1}}
\expandafter\ifx\csname urlstyle\endcsname\relax
  \providecommand{\doi}[1]{doi: #1}\else
  \providecommand{\doi}{doi: \begingroup \urlstyle{rm}\Url}\fi

\bibitem[Abdi \& Williams(2010)Abdi and Williams]{abdi2010principal}
Abdi, Herv{\'e} and Williams, Lynne~J.
\newblock Principal component analysis.
\newblock \emph{Wiley interdisciplinary reviews: computational statistics},
  2\penalty0 (4):\penalty0 433--459, 2010.

\bibitem[Alperin(1993)]{alperin1993local}
Alperin, Jonathan~L.
\newblock \emph{Local representation theory: Modular representations as an
  introduction to the local representation theory of finite groups}, volume~11.
\newblock Cambridge University Press, 1993.

\bibitem[Bair et~al.(2006)Bair, Hastie, Paul, and
  Tibshirani]{bair2006prediction}
Bair, Eric, Hastie, Trevor, Paul, Debashis, and Tibshirani, Robert.
\newblock Prediction by supervised principal components.
\newblock \emph{Journal of the American Statistical Association}, 101\penalty0
  (473):\penalty0 119--137, 2006.

\bibitem[Barshan et~al.(2011)Barshan, Ghodsi, Azimifar, and
  Jahromi]{barshan2011supervised}
Barshan, Elnaz, Ghodsi, Ali, Azimifar, Zohreh, and Jahromi, Mansoor~Zolghadri.
\newblock Supervised principal component analysis: Visualization,
  classification and regression on subspaces and submanifolds.
\newblock \emph{Pattern Recognition}, 44\penalty0 (7):\penalty0 1357--1371,
  2011.

\bibitem[Belkin \& Niyogi(2003)Belkin and Niyogi]{belkin2003laplacian}
Belkin, Mikhail and Niyogi, Partha.
\newblock Laplacian eigenmaps for dimensionality reduction and data
  representation.
\newblock \emph{Neural computation}, 15\penalty0 (6):\penalty0 1373--1396,
  2003.

\bibitem[Bell(2016)]{bell2016trace}
Bell, Jordan.
\newblock Trace class operators and {Hilbert}-{Schmidt} operators.
\newblock \emph{Department of Mathematics, University of Toronto, Technical
  Report}, 2016.

\bibitem[Boyd \& Vandenberghe(2004)Boyd and Vandenberghe]{boyd2004convex}
Boyd, Stephen and Vandenberghe, Lieven.
\newblock \emph{Convex optimization}.
\newblock Cambridge university press, 2004.

\bibitem[Cattell(1966)]{cattell1966scree}
Cattell, Raymond~B.
\newblock The scree test for the number of factors.
\newblock \emph{Multivariate behavioral research}, 1\penalty0 (2):\penalty0
  245--276, 1966.

\bibitem[Cox \& Cox(2008)Cox and Cox]{cox2008multidimensional}
Cox, Michael~AA and Cox, Trevor~F.
\newblock Multidimensional scaling.
\newblock In \emph{Handbook of data visualization}, pp.\  315--347. Springer,
  2008.

\bibitem[Domingos(1999)]{domingos1999role}
Domingos, Pedro.
\newblock The role of {Occam's} razor in knowledge discovery.
\newblock \emph{Data mining and knowledge discovery}, 3\penalty0 (4):\penalty0
  409--425, 1999.

\bibitem[Donoho(2000)]{donoho2000high}
Donoho, David~L.
\newblock High-dimensional data analysis: The curses and blessings of
  dimensionality.
\newblock \emph{AMS math challenges lecture}, 2000.

\bibitem[Dumais(2004)]{dumais2004latent}
Dumais, Susan~T.
\newblock Latent semantic analysis.
\newblock \emph{Annual review of information science and technology},
  38\penalty0 (1):\penalty0 188--230, 2004.

\bibitem[Friedman et~al.(2009)Friedman, Hastie, and
  Tibshirani]{friedman2001elements}
Friedman, Jerome, Hastie, Trevor, and Tibshirani, Robert.
\newblock \emph{The elements of statistical learning}, volume~2.
\newblock Springer series in statistics New York, NY, USA:, 2009.

\bibitem[Ghodsi(2006)]{ghodsi2006dimensionality}
Ghodsi, Ali.
\newblock Dimensionality reduction: a short tutorial.
\newblock \emph{Department of Statistics and Actuarial Science, Univ. of
  Waterloo, Ontario, Canada}, 37, 2006.

\bibitem[Ghodsi(2015)]{web_classification}
Ghodsi, Ali.
\newblock Classification course, department of statistics and actuarial
  science, university of {Waterloo}.
\newblock Online Youtube Videos, 2015.
\newblock Accessed: January 2019.

\bibitem[Ghodsi(2017)]{web_data_visualization}
Ghodsi, Ali.
\newblock Data visualization course, department of statistics and actuarial
  science, university of {Waterloo}.
\newblock Online Youtube Videos, 2017.
\newblock Accessed: January 2019.

\bibitem[Ghojogh \& Crowley(2019)Ghojogh and Crowley]{ghojogh2019instance}
Ghojogh, Benyamin and Crowley, Mark.
\newblock Instance ranking and numerosity reduction using matrix decomposition
  and subspace learning.
\newblock In \emph{Advances in Artificial Intelligence: 32nd Canadian
  Conference on Artificial Intelligence, Canadian AI 2019}. Springer, 2019.

\bibitem[Ghojogh et~al.(2019{\natexlab{a}})Ghojogh, Karray, and
  Crowley]{ghojogh2019eigenvalue}
Ghojogh, Benyamin, Karray, Fakhri, and Crowley, Mark.
\newblock Eigenvalue and generalized eigenvalue problems: Tutorial.
\newblock \emph{arXiv preprint arXiv:1903.11240}, 2019{\natexlab{a}}.

\bibitem[Ghojogh et~al.(2019{\natexlab{b}})Ghojogh, Samad, Mashhadi, Kapoor,
  Ali, Karray, and Crowley]{ghojogh2019feature}
Ghojogh, Benyamin, Samad, Maria~N, Mashhadi, Sayema~Asif, Kapoor, Tania, Ali,
  Wahab, Karray, Fakhri, and Crowley, Mark.
\newblock Feature selection and feature extraction in pattern analysis: A
  literature review.
\newblock \emph{arXiv preprint arXiv:1905.02845}, 2019{\natexlab{b}}.

\bibitem[Golub \& Reinsch(1970)Golub and Reinsch]{golub1970singular}
Golub, Gene~H and Reinsch, Christian.
\newblock Singular value decomposition and least squares solutions.
\newblock \emph{Numerische mathematik}, 14\penalty0 (5):\penalty0 403--420,
  1970.

\bibitem[Goodfellow et~al.(2016)Goodfellow, Bengio, and
  Courville]{goodfellow2016deep}
Goodfellow, Ian, Bengio, Yoshua, and Courville, Aaron.
\newblock \emph{Deep learning}.
\newblock MIT press, 2016.

\bibitem[Gretton et~al.(2005)Gretton, Bousquet, Smola, and
  Sch{\"o}lkopf]{gretton2005measuring}
Gretton, Arthur, Bousquet, Olivier, Smola, Alex, and Sch{\"o}lkopf, Bernhard.
\newblock Measuring statistical dependence with {Hilbert}-{Schmidt} norms.
\newblock In \emph{International conference on algorithmic learning theory},
  pp.\  63--77. Springer, 2005.

\bibitem[Gubner(2006)]{gubner2006probability}
Gubner, John~A.
\newblock \emph{Probability and random processes for electrical and computer
  engineers}.
\newblock Cambridge University Press, 2006.

\bibitem[Ham et~al.(2004)Ham, Lee, Mika, and Sch{\"o}lkopf]{ham2004kernel}
Ham, Ji~Hun, Lee, Daniel~D, Mika, Sebastian, and Sch{\"o}lkopf, Bernhard.
\newblock A kernel view of the dimensionality reduction of manifolds.
\newblock In \emph{International Conference on Machine Learning}, 2004.

\bibitem[Hein \& Bousquet(2004)Hein and Bousquet]{hein2004kernels}
Hein, Matthias and Bousquet, Olivier.
\newblock Kernels, associated structures and generalizations.
\newblock \emph{Max-Planck-Institut fuer biologische Kybernetik, Technical
  Report}, 2004.

\bibitem[Herbrich(2001)]{herbrich2001learning}
Herbrich, Ralf.
\newblock \emph{Learning kernel classifiers: theory and algorithms}.
\newblock Mit Press, 2001.

\bibitem[Hofmann et~al.(2008)Hofmann, Sch{\"o}lkopf, and
  Smola]{hofmann2008kernel}
Hofmann, Thomas, Sch{\"o}lkopf, Bernhard, and Smola, Alexander~J.
\newblock Kernel methods in machine learning.
\newblock \emph{The annals of statistics}, pp.\  1171--1220, 2008.

\bibitem[Jolliffe(2011)]{jolliffe2011principal}
Jolliffe, Ian.
\newblock \emph{Principal component analysis}.
\newblock Springer, 2011.

\bibitem[Karimi(2018)]{karimi2018exploring}
Karimi, Amir-Hossein.
\newblock Exploring new forms of random projections for prediction and
  dimensionality reduction in big-data regimes.
\newblock Master's thesis, University of Waterloo, 2018.

\bibitem[Karimi et~al.(2018)Karimi, Wong, and Ghodsi]{karimi2018srp}
Karimi, Amir-Hossein, Wong, Alexander, and Ghodsi, Ali.
\newblock Srp: Efficient class-aware embedding learning for large-scale data
  via supervised random projections.
\newblock \emph{arXiv preprint arXiv:1811.03166}, 2018.

\bibitem[Levada(2020)]{levada2020parametric}
Levada, Alexandre~LM.
\newblock Parametric {PCA} for unsupervised metric learning.
\newblock \emph{Pattern Recognition Letters}, 135:\penalty0 425--430, 2020.

\bibitem[Ma \& Dai(2011)Ma and Dai]{ma2011principal}
Ma, Shuangge and Dai, Ying.
\newblock Principal component analysis based methods in bioinformatics studies.
\newblock \emph{Briefings in bioinformatics}, 12\penalty0 (6):\penalty0
  714--722, 2011.

\bibitem[Minka(2001)]{minka2001automatic}
Minka, Thomas~P.
\newblock Automatic choice of dimensionality for pca.
\newblock In \emph{Advances in neural information processing systems}, pp.\
  598--604, 2001.

\bibitem[Mohammadzade et~al.(2017)Mohammadzade, Ghojogh, Faezi, and
  Shabany]{mohammadzade2017critical}
Mohammadzade, Hoda, Ghojogh, Benyamin, Faezi, Sina, and Shabany, Mahdi.
\newblock Critical object recognition in millimeter-wave images with robustness
  to rotation and scale.
\newblock \emph{JOSA A}, 34\penalty0 (6):\penalty0 846--855, 2017.

\bibitem[Pearson(1901)]{pearson1901liii}
Pearson, Karl.
\newblock {LIII}. on lines and planes of closest fit to systems of points in
  space.
\newblock \emph{The London, Edinburgh, and Dublin Philosophical Magazine and
  Journal of Science}, 2\penalty0 (11):\penalty0 559--572, 1901.

\bibitem[Rahimi \& Recht(2008)Rahimi and Recht]{rahimi2008random}
Rahimi, Ali and Recht, Benjamin.
\newblock Random features for large-scale kernel machines.
\newblock In \emph{Advances in neural information processing systems}, pp.\
  1177--1184, 2008.

\bibitem[Roweis \& Saul(2000)Roweis and Saul]{roweis2000nonlinear}
Roweis, Sam~T and Saul, Lawrence~K.
\newblock Nonlinear dimensionality reduction by locally linear embedding.
\newblock \emph{science}, 290\penalty0 (5500):\penalty0 2323--2326, 2000.

\bibitem[Rumelhart et~al.(1986)Rumelhart, Hinton, and
  Williams]{rumelhart1986learning}
Rumelhart, David~E, Hinton, Geoffrey~E, and Williams, Ronald~J.
\newblock Learning representations by back-propagating errors.
\newblock \emph{Nature}, 323\penalty0 (6088):\penalty0 533--536, 1986.

\bibitem[Sch{\"o}lkopf et~al.(1997)Sch{\"o}lkopf, Smola, and
  M{\"u}ller]{scholkopf1997kernel}
Sch{\"o}lkopf, Bernhard, Smola, Alexander, and M{\"u}ller, Klaus-Robert.
\newblock Kernel principal component analysis.
\newblock In \emph{International conference on artificial neural networks},
  pp.\  583--588. Springer, 1997.

\bibitem[Sch{\"o}lkopf et~al.(1998)Sch{\"o}lkopf, Smola, and
  M{\"u}ller]{scholkopf1998nonlinear}
Sch{\"o}lkopf, Bernhard, Smola, Alexander, and M{\"u}ller, Klaus-Robert.
\newblock Nonlinear component analysis as a kernel eigenvalue problem.
\newblock \emph{Neural computation}, 10\penalty0 (5):\penalty0 1299--1319,
  1998.

\bibitem[Sharifzadeh et~al.(2017)Sharifzadeh, Ghodsi, Clemmensen, and
  Ersb{\o}ll]{sharifzadeh2017sparse}
Sharifzadeh, Sara, Ghodsi, Ali, Clemmensen, Line~H, and Ersb{\o}ll, Bjarne~K.
\newblock Sparse supervised principal component analysis (sspca) for dimension
  reduction and variable selection.
\newblock \emph{Engineering Applications of Artificial Intelligence},
  65:\penalty0 168--177, 2017.

\bibitem[Shen \& Huang(2008)Shen and Huang]{shen2008sparse}
Shen, Haipeng and Huang, Jianhua~Z.
\newblock Sparse principal component analysis via regularized low rank matrix
  approximation.
\newblock \emph{Journal of multivariate analysis}, 99\penalty0 (6):\penalty0
  1015--1034, 2008.

\bibitem[Stankovi{\'c} \& Falkowski(2003)Stankovi{\'c} and
  Falkowski]{stankovic2003haar}
Stankovi{\'c}, Radomir~S and Falkowski, Bogdan~J.
\newblock The haar wavelet transform: its status and achievements.
\newblock \emph{Computers \& Electrical Engineering}, 29\penalty0 (1):\penalty0
  25--44, 2003.

\bibitem[Stewart(1993)]{stewart1993early}
Stewart, Gilbert~W.
\newblock On the early history of the singular value decomposition.
\newblock \emph{SIAM review}, 35\penalty0 (4):\penalty0 551--566, 1993.

\bibitem[Strange \& Zwiggelaar(2014)Strange and Zwiggelaar]{strange2014open}
Strange, Harry and Zwiggelaar, Reyer.
\newblock \emph{Open Problems in Spectral Dimensionality Reduction}.
\newblock Springer, 2014.

\bibitem[Tenenbaum et~al.(2000)Tenenbaum, De~Silva, and
  Langford]{tenenbaum2000global}
Tenenbaum, Joshua~B, De~Silva, Vin, and Langford, John~C.
\newblock A global geometric framework for nonlinear dimensionality reduction.
\newblock \emph{science}, 290\penalty0 (5500):\penalty0 2319--2323, 2000.

\bibitem[Tibshirani et~al.(2015)Tibshirani, Wainwright, and
  Hastie]{tibshirani2015statistical}
Tibshirani, Robert, Wainwright, Martin, and Hastie, Trevor.
\newblock \emph{Statistical learning with sparsity: the lasso and
  generalizations}.
\newblock Chapman and Hall/CRC, 2015.

\bibitem[Tipping(2001)]{tipping2001sparse}
Tipping, Michael~E.
\newblock Sparse kernel principal component analysis.
\newblock In \emph{Advances in neural information processing systems}, pp.\
  633--639, 2001.

\bibitem[Turk \& Pentland(1991{\natexlab{a}})Turk and
  Pentland]{turk1991eigenfaces}
Turk, Matthew and Pentland, Alex.
\newblock Eigenfaces for recognition.
\newblock \emph{Journal of cognitive neuroscience}, 3\penalty0 (1):\penalty0
  71--86, 1991{\natexlab{a}}.

\bibitem[Turk \& Pentland(1991{\natexlab{b}})Turk and Pentland]{turk1991face}
Turk, Matthew~A and Pentland, Alex~P.
\newblock Face recognition using eigenfaces.
\newblock In \emph{Computer Vision and Pattern Recognition, 1991. Proceedings
  CVPR'91., IEEE Computer Society Conference on}, pp.\  586--591. IEEE,
  1991{\natexlab{b}}.

\bibitem[Wang(2014)]{wang2014analysis}
Wang, Yi-Qing.
\newblock An analysis of the viola-jones face detection algorithm.
\newblock \emph{Image Processing On Line}, 4:\penalty0 128--148, 2014.

\bibitem[Yang et~al.(2000)Yang, Ahuja, and Kriegman]{yang2000face}
Yang, M-H, Ahuja, Narendra, and Kriegman, David.
\newblock Face recognition using kernel eigenfaces.
\newblock In \emph{Image processing, 2000. proceedings. 2000 international
  conference on}, volume~1, pp.\  37--40. IEEE, 2000.

\bibitem[Zou et~al.(2006)Zou, Hastie, and Tibshirani]{zou2006sparse}
Zou, Hui, Hastie, Trevor, and Tibshirani, Robert.
\newblock Sparse principal component analysis.
\newblock \emph{Journal of computational and graphical statistics}, 15\penalty0
  (2):\penalty0 265--286, 2006.

\end{thebibliography}
\bibliographystyle{icml2016}

\end{document}